\documentclass[11pt]{article}

\usepackage[preprint]{acl}
\usepackage{times}
\usepackage{latexsym}
\usepackage{soul}
\usepackage[T1]{fontenc}

\usepackage[utf8]{inputenc}

\usepackage{microtype}

\usepackage{inconsolata}

\usepackage{graphicx}

\usepackage{microtype}      
\usepackage{xcolor}         
\usepackage{amsmath}
\usepackage{amssymb}
\usepackage{amsthm}
\newtheorem{proposition}{Proposition}
\newtheorem{corollary}{Corollary}
\usepackage{enumitem}
\usepackage{algorithm} 
\usepackage{algorithmic} 
\usepackage[table]{xcolor}
\definecolor{tablehighlight}{HTML}{F2E9DB}
\definecolor{tabgreen}{HTML}{E5F0E5} 
\definecolor{tabblue}{HTML}{E1EBF5} 
\definecolor{tabpink}{HTML}{F5E6E6}
\usepackage{graphicx}       
\usepackage{booktabs}       
\usepackage{colortbl}       
\usepackage{multirow}       
\usepackage{rotating}       
\usepackage{wrapfig} 
\usepackage{subcaption}     

\title{Autonomy-of-Heads: Data-Free Sparse Attention from Frozen Query-Key Geometry}

\author{
Yehan Yang$^{1,2}$,
~Junyuan Shang$^{4,^\dagger}$,
~Yang Li$^{1,2}$,
~Guanqun Zhao$^{3,4}$,\\
~\textbf{Shuohuan Wang}$^{4}$,
~\textbf{Dianhai Yu}$^{4}$\\
\small  \normalsize $^1$ Institute of Computing Technology, Chinese Academy of Sciences\\
\small  \normalsize $^2$ University of Chinese Academy of Sciences\\
\small \normalsize  $^3$ Beijing University of Posts and Telecommunications
\small  \normalsize $^4$ Baidu Inc.\\
 \texttt{\{yangyehan25z,liyang23s\}@ict.ac.cn}{, }
 \texttt{zhao-guanqun@bupt.edu.cn} \\ 
\texttt{\{shangjunyuan, wangshuohuan, yudianhai\}@baidu.com}
}

\begin{document}

\maketitle
{
  \renewcommand{\thefootnote}{}
  \footnotetext{$^\dagger$ Corresponding author.}
  \footnotetext{Project: https://undground.fun/aoh/}
}
\begin{abstract}
Long-context LLM inference is bottlenecked by quadratic attention computation and growing KV-cache costs. Existing sparse attention and KV-compression methods typically decide which tokens or heads to preserve from runtime attention scores, observation windows, calibration prompts, or learned gates, making head diagnosis input-dependent and costly to deploy. We propose Autonomy-of-Heads (AoH), a data-free method that identifies retrieval and streaming heads from the spectral geometry of query-key projections. AoH defines the kernel attention operator $M_h = W_K^{h\top}W_Q^h$ and uses its effective-rank as a weight-space measure of head function: concentrated spectra indicate a small number of dominant query-key matching directions and are associated with retrieval heads, whereas diffuse spectra indicate the absence of a dominant global matching direction and are associated with streaming heads. We further derive an efficient $d_\text{head}$-dimensional computation that avoids constructing the full $d_\text{model}\times d_\text{model}$ matrix. We conducted extensive experiments across models demonstrating that at 50\% sparsity, AoH retains 96.5\% of Full Attention performance on average while reducing prefill and decode latency by up to 41.4\% and 66.0\%, respectively, and KV-cache memory by 50.0\% at 256K tokens.


\end{abstract}




\begin{figure}[t]
 \begin{center}
   \includegraphics[width=0.85\columnwidth]{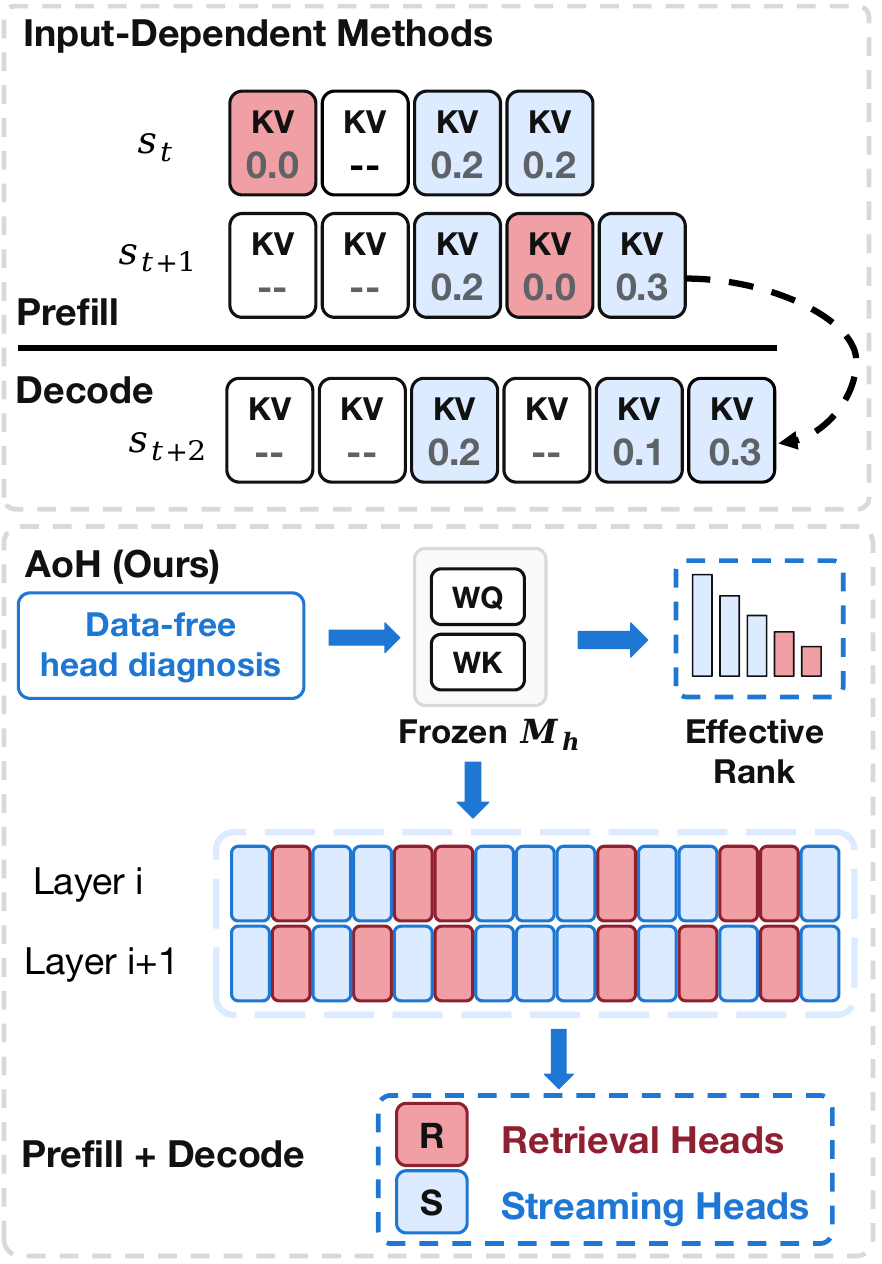}
     \vspace{-0.5em}
    \caption{\label{fig:framework}Input-Dependent Methods vs.\ AoH.}
    \end{center}
     \vspace{-2.2em}
\end{figure}

\begin{figure*}[t]
    \begin{center}
    \makebox[\textwidth][c]{\includegraphics[width=1.10\textwidth]{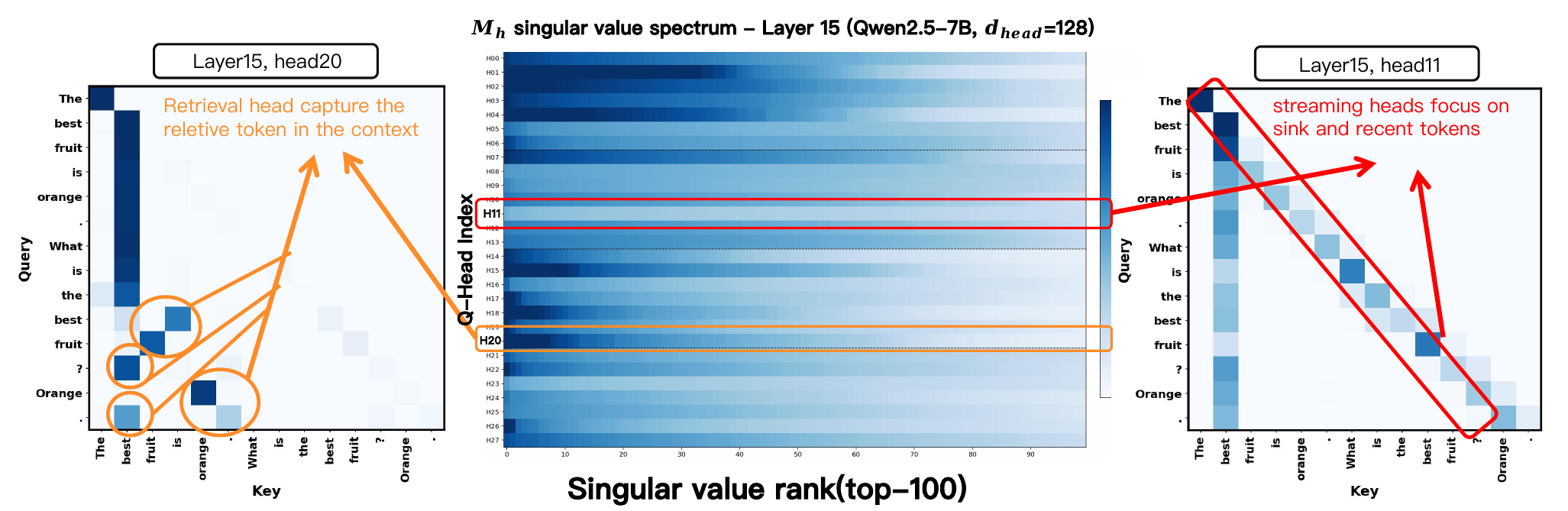}}
     \vspace{-1.3em}
    \caption{\label{fig:spectrum_attn}Visualization of the $M_h$ singular value spectrum and attention maps in the Qwen2.5-7B model for the sentence \emph{``The best fruit is orange. What is the best fruit? Orange.''}, showing that concentrated spectra correspond to retrieval heads while uniform spectra correspond to streaming heads. Left: Retrieval heads (e.g., Layer 15, Head~20) attend selectively to contextually relevant tokens, requiring full attention.Center: $M_h$ singular value spectrum heatmap; \textcolor{orange}{orange}-highlighted rows (concentrated spectrum) are retrieval heads, \textcolor{red}{red}-highlighted rows (uniform spectrum) are streaming heads. Right: Streaming heads (e.g., Layer 15, Head~11) focus on sink and recent tokens, sufficient with sliding-window attention.
    }
    \end{center}
     \vspace{-1.5em}
\end{figure*}

\section{Introduction}

Long-context inference is increasingly central to LLM applications, particularly in agentic workflows~\cite{anthropic2024,team2025kimi} and reasoning scenarios. But standard attention incurs quadratic score computation and a linearly growing KV cache. Existing sparse-attention and KV-compression methods usually decide what to keep from runtime behavior: heavy-hitter attention mass~\cite{zhang2023h2o,wan2024d2o}, local windows or selectors~\cite{StreamingLLM,fu2025sliding,mohtashami2023random,deepseek2025deepseek}, cross-layer reuse~\cite{kascade,hysparse,indexcache,CLA}, or learned head gates~\cite{duoattention,lycheedecode}. As shown in Figure~\ref{fig:framework}, these approaches are effective but input-dependent or rely on learned gates, and calibration procedures. We ask a question: \emph{Can frozen weights alone provide a useful prior for attention-head function, independent of runtime attention scores, calibration prompts, or additional training?}
As illustrated in Figure~\ref{fig:spectrum_attn}, we observe that heads with concentrated spectra tend to exhibit retrieval-style attention, selectively attending to contextually relevant tokens, whereas heads with diffuse spectra mainly focus on sink and recent tokens. 

\ul{This observation leads to our key insight: heads know what they know!} The head-specific attention operator is already encoded in the frozen query-key projections. During decoding, the score of head $h$ can be written as $\text{scores}_{h,i}=X_\text{ctx}W_K^{h\top}W_Q^h x_i$, where the middle operator is head-specific. We therefore define the \textbf{kernel attention matrix} $M_h=W_K^{h\top}W_Q^h$, whose spectral geometry characterizes the query-key matching directions used by head $h$.


We propose \textbf{Autonomy-of-Heads} (AoH), a data-free head-selection criterion based on the effective rank of $M_h$~\cite{roy2007effective}. A low effective rank indicates a few dominant matching directions and suggests a retrieval role requiring global context; a high effective rank indicates a diffuse spectrum and suggests a streaming role that can use sink and recent-window attention. Because AoH depends only on frozen weights, head labels are computed once before any prompt is processed, enabling sparse attention from prefill rather than after runtime observation.
For efficient deployment, we show that the nonzero singular values of $M_h$ can be computed from a $d_\text{head}\times d_\text{head}$ proxy, avoiding construction of the full $d_\text{model}\times d_\text{model}$ matrix. We then use the resulting head labels to build AoH-guided sparse attention: retrieval heads retain global attention, while streaming heads use bounded sink and recent-window caches. This keeps the sparse policy simple, training-free, and compatible with GQA and FlashAttention-style implementations.

We evaluate AoH on LongBench across Models. At 50\% sparsity\footnote{We define sparsity as 
$\mathrm{sparsity}=1-\frac{N_{\mathrm{full}}}{N_{\mathrm{total}}}$, where $N_{\mathrm{full}}$ is the number of full-attention heads and $N_{\mathrm{total}}$ is the total number of attention heads. Since streaming heads retain only a small sink-plus-recent cache ($128+256$ tokens in this paper), this cache is negligible at long contexts such as 32K, 64K, and 128K. We therefore use the equivalent KV-cache budget approximation $\mathrm{KV\ budget}\approx 1-\mathrm{sparsity}$ in the following analysis.}, AoH remains close to Full Attention and consistently outperforms baselines, random and reversed head selection, showing that the effective-rank ordering captures meaningful head-function structure rather than an arbitrary sparse subset. Efficiency results further show that AoH reduces prefill/decode latency and KV-cache memory at long context lengths. Our contributions are summarized as follows: 
(1) We introduce AoH, a data-free and training-free method for identifying retrieval and streaming heads directly from frozen query-key geometry.(2) We develop an effective-rank classifier for $M_h$ and an efficient $d_\text{head}$-dimensional computation of its nonzero spectrum.
(3)We instantiate the AoH as a simple sparse-attention policy and evaluate it on three long-context LLMs, where it preserves strong accuracy and improves inference efficiency at 50\% sparsity.

\begin{figure*}[t]
    \centering
    \makebox[\textwidth][c]{\includegraphics[width=1.0\textwidth]{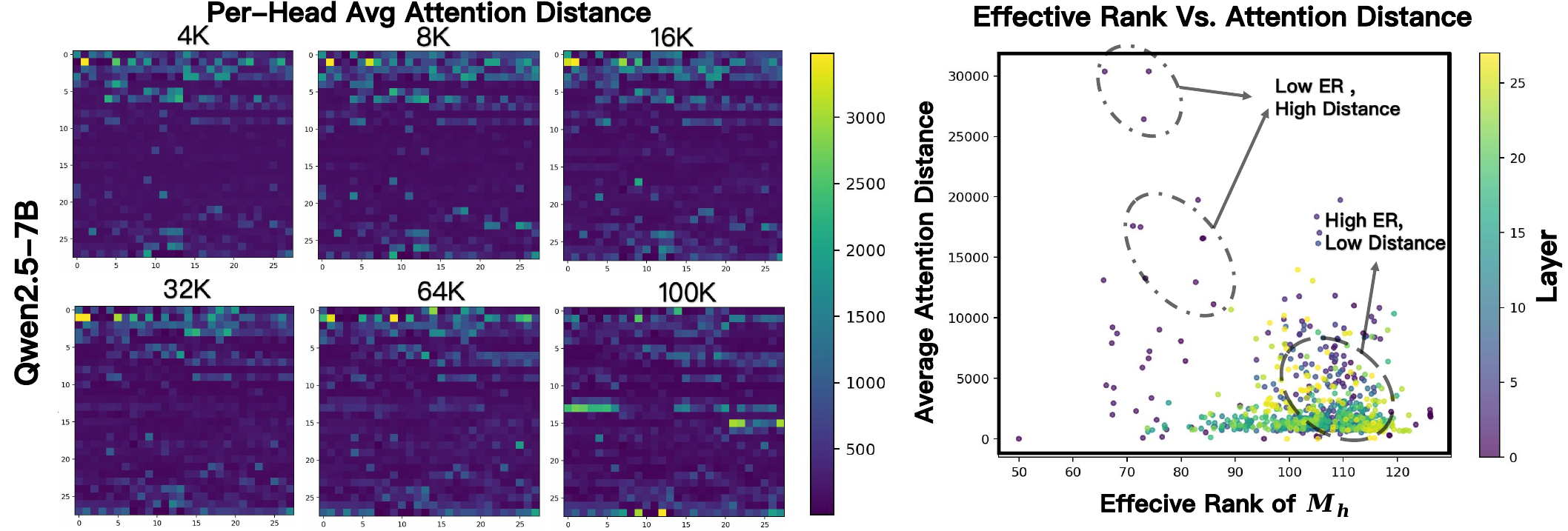}}
    \caption{
Empirical relationship between effective-rank and head attention distance on Qwen2.5-7B.
Left: Per-head average attention distance heatmaps under context lengths from 4K to 100K. The layer--head distance patterns remain largely consistent across context lengths. Right: Scatter plot of effective-rank versus average attention distance, with points colored by layer. Heads with lower effective-rank tend to attend farther into the context, while high-ER heads concentrate on recent tokens.
}
    \label{fig:ER-AAD}
    \vspace{-1.5em}
\end{figure*}



\begin{figure}[t]
    \centering
    \includegraphics[width=0.95\columnwidth]{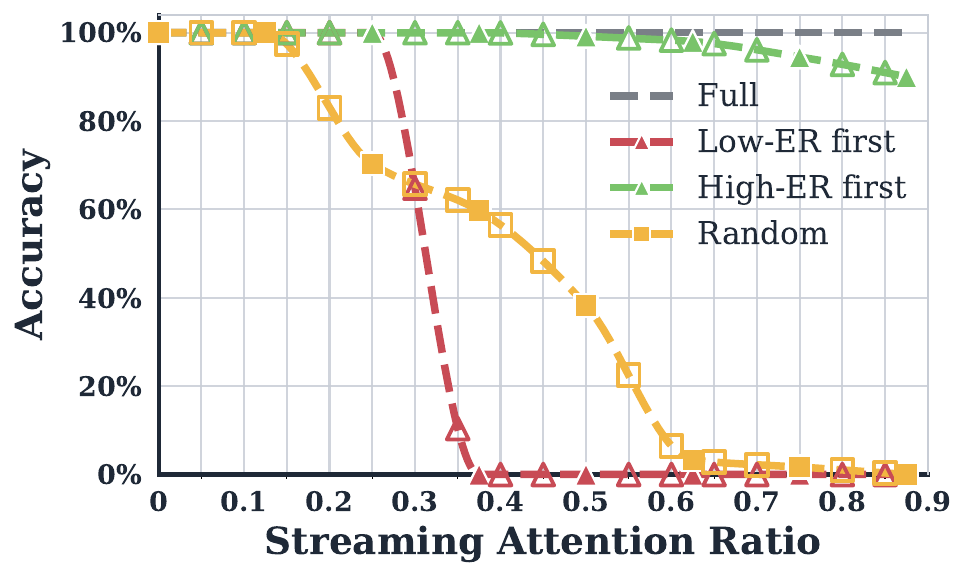}
      \vspace{-0.5em}
    \caption{Passkey retrieval under progressive head-to-streaming conversion. Restricting low-ER heads to sink-and-recent attention rapidly collapses retrieval accuracy, while restricting high-ER heads has little impact.}
    \label{fig:passKey}
    \vspace{-1.8em}
\end{figure}

\section{Related Work}
\subsection{Layer-wise Sparse Attention}
A major line of efficient long-context work sparsifies attention at the token or layer level. Early sparse-attention architectures such as Longformer~\cite{longformer} and BigBird~\cite{bigbird} replace dense attention with fixed local, random, and global patterns, reducing the quadratic cost of self-attention. For pretrained LLM inference, StreamingLLM~\citep{StreamingLLM} combines attention sinks with a recent window, while token-eviction and KV-selection methods such as $H_{2}O$~\cite{zhang2023h2o}, $D_{2}O$~\cite{wan2024d2o}, NACL~\cite{nacl}, SnapKV~\cite{snapkv}, and Quest~\cite{quest} retain tokens or KV caches using observed attention statistics, proxy/observation tokens, randomized eviction, or query-aware runtime estimates. These methods are effective, but they primarily decide \emph{which tokens} to keep from input-dependent signals. AoH instead asks which heads should retain global access before seeing any input, using only frozen query-key weights.

\subsection{Head-wise Sparse Attention}
A second line of work exploits the functional heterogeneity of attention heads. RazorAttention~\cite{razorattention} observes that only a small number of retrieval heads need long-range cache access, while most heads focus on local context; it keeps full cache for retrieval heads and compresses non-retrieval heads. DuoAttention~\cite{duoattention} similarly separates retrieval and streaming heads, using full KV cache for retrieval heads and lightweight cache for streaming heads, but obtains head labels through a trained gate. LycheeDecode~\cite{lycheedecode} uses HardKuma-based routing for head classification and also studies cross-layer reuse. These works motivate head-aware sparse attention, but their head diagnosis is tied to attention observations, learned gates, or task-dependent procedures. AoH contributes a complementary data-free classifier: it assigns head or KV-group roles from the spectral geometry of frozen query-key weights.

\subsection{Cross-layer Sharing}
Cross-layer methods reduce redundancy by sharing KV states, representations, or sparse indices across adjacent layers. CLA~\cite{CLA} shares key/value heads between layers, and YOCO~\cite{yoco} redesigns the decoder so that global KV cache is stored once. KASCADE~\citep{kascade}, HySparse~\cite{hysparse}, and IndexCache~\citep{indexcache} exploit cross-layer stability for sparse index or KV reuse; HySparse in particular derives both sparse-layer token selection and KV cache from a preceding full-attention layer. These methods are largely orthogonal to AoH: they exploit redundancy across layers, while AoH identifies which heads should retain global access before observing any input. As a result, AoH can serve as a data-free head prior for cross-layer or token-selection systems without replacing their runtime selectors.


\section{Observations}
\label{sec:observation}

Before introducing Autonomy-of-Heads (AoH), we first ask \ul{whether frozen query-key geometry reflects stable and functional differences among attention heads?} We present two empirical observations that motivate our work and answer this question. 

\textbf{Observation 1: Long-range attention behavior is stable and negatively associated with effective-rank (ER).} In Figure~\ref{fig:ER-AAD}, the left heatmaps show that per-head average attention distance on Qwen2.5-7B remains structurally stable across 4K--100K contexts, suggesting that long-range attention is a persistent head-level property rather than a prompt-length artifact. The right panel further shows that lower-ER heads tend to attend farther into the context, whereas higher-ER heads are more local, suggesting an overall negative association between ER and long-range attention behavior.



\textbf{Observation 2: Low-ER heads are functionally important.}
Figure~\ref{fig:passKey} evaluates passkey retrieval when different KV heads are progressively converted to streaming attention.\footnote{We conduct the study on Llama3.1-8B-Instruct at 32K context length. Converted KV heads retain only the first 128 sink tokens and the most recent 256 tokens, while the remaining heads keep full-context KV Cache.}
We compare three conversion orders: Low-ER first, High-ER first, and Random. For each Streaming Attention ratio, we evaluate exact-match passkey retrieval accuracy over 100 samples, with passkeys inserted at 20\%, 40\%, 60\%, and 80\% depths of the 32K context. The results show a clear functional separation. Restricting high-ER heads to sink-plus-recent attention has little effect on retrieval accuracy. In contrast, converting low-ER heads causes accuracy to collapse rapidly, indicating that they are essential for accessing remote information. Random conversion lies between the two strategies.

Together, these observations motivate Autonomy-of-Heads: using ER as a data-free weight-space criterion to assign low-ER heads to full attention and high-ER heads to sink-and-recent attention.

\section{Autonomy-of-Heads}
\label{sec:AoH}
In this section, we will introduce the AoH method in detail. Prior work commonly distinguishes retrieval heads, which support long-range content lookup, from streaming heads, which mainly rely on sink and recent tokens~\cite{duoattention,lycheedecode,shaikh2026linear}. AoH asks whether this distinction can be approximated without calibration prompts, runtime attention traces, or trained gates. During decoding, for the $i$-th query token with context $X_\text{ctx} \in \mathbb{R}^{T \times d_\text{model}}$, the attention scores for head $h$ are: \footnote{Here, we consider only the case without additions such as RoPE. The ablation study in Section~\ref{Ablation of AoH-RoPE} shows that RoPE-aware AoH produces highly consistent head rankings with vanilla AoH.}
\begin{equation}
\label{eq:1}
\resizebox{\columnwidth}{!}{%
  $\underbrace{\text{scores}_{h,i}}_{T \times \text{batch-size}}
  = \underbrace{X_\text{ctx}}_{T \times d_\text{model}} \cdot
  \underbrace{W_K^{h\top}}_{d_\text{model} \times d_\text{head}} \cdot
  \underbrace{W_Q^h}_{d_\text{head} \times d_\text{model}} \cdot
  \underbrace{x_i}_{d_\text{model} \times \text{batch-size}}$%
}
\end{equation}
For the same input, $X_\text{ctx}$ and $x_i$ are shared across heads; the head-specific component is the middle operator. We therefore define the kernel attention matrix: 
\begin{equation}
  M_h = W_K^{h\top} W_Q^h \in \mathbb{R}^{d_\text{model} \times d_\text{model}}
\end{equation}
which acts as a frozen query-key matching operator for head $h$.

The matrix $M_h$ connects query-side information demand to key-side information supply. For each attention head $h$, we compute the singular value decomposition
\begin{equation}
    M_h = U_h\Sigma_hV_h^\top,
\end{equation}
where $M_h\in\mathbb{R}^{d_\text{model}\times d_\text{model}}$, and
$U_h,V_h\in\mathbb{R}^{d_\text{model}\times d_\text{model}}$ are
orthogonal matrices. $\Sigma_h$ is a diagonal matrix containing the
singular values of $M_h$ in descending order. Let
$\sigma_{h,1}\geq\cdots\geq\sigma_{h,r_h}>0$ denote the nonzero
singular values, where $r_h=\operatorname{rank}(M_h)$. The right
singular directions in $V_h$ represent query-side directions that
activate the head, whereas the left singular directions in $U_h$
represent key-side directions that can be matched in the context. The
singular values in $\Sigma_h$ quantify the strength of these query-key
matching directions. Let $r_h$ be the number of nonzero singular
values of head $h$, let $\hat{\sigma}_k$ denote the normalized weight of its $k$-th singular direction, and let $\operatorname{eff\_rank}(h)$ denote the effective rank of the head:
\begin{equation}
\begin{split}
    \hat{\sigma}_k &= \frac{\sigma_{h,k}}{\sum_{j=1}^{r_h}\sigma_{h,j}}, \qquad k=1,\ldots,r_h, \\
    \operatorname{eff\_rank}(h) &= \exp\left(-\sum_{k=1}^{r_h}\hat{\sigma}_k\log\hat{\sigma}_k\right) \in [1,r_h]
\end{split}
\end{equation}
Here, $\operatorname{eff\_rank}(h)\in[1,r_h]$ is the exponential of the
Shannon entropy of the normalized singular-value distribution. A low
ER indicates a concentrated spectrum dominated by a few query-key
matching directions, whereas a high ER indicates a diffuse spectrum
spread across many directions. We therefore classify low-ER heads as
retrieval heads and high-ER heads as streaming heads:
\begin{itemize}[leftmargin=10pt]
\item Retrieval Heads: low ER heads whose attention can be driven by a small number of content-matching directions and may need to search globally over the context.
\item Streaming Heads: high ER heads with more diffuse spectra and no small set of dominant global matching directions. They are less likely to require content-specific global retrieval.
\end{itemize}
\textbf{Optimized Implementation}. Directly constructing $M_h \in \mathbb{R}^{d_\text{model} \times d_\text{model}}$ and performing SVD is unnecessary and expensive. From Eq.~\eqref{eq:1},
$$\resizebox{\columnwidth}{!}{%
  $\operatorname{rank}(M_h) \leq \min\!\left(\operatorname{rank}(W_K^{h\top}),\operatorname{rank}(W_Q^h)\right) \leq d_\text{head}$%
}$$
Although $M_h$ is a $d_\text{model}\times d_\text{model}$ matrix, its nonzero spectrum is limited by the $d_\text{head}$ bottleneck. Using Sylvester's determinant theorem~\citet{Sylvester}, $AB$ and $BA$ share the same nonzero eigenvalues. Therefore, the nonzero eigenvalues of $M_h^\top M_h$ can be obtained from the smaller proxy
\begin{equation}  
C_h = (W_Q^h W_Q^{h\top})(W_K^h W_K^{h\top}) \in \mathbb{R}^{d_\text{head} \times d_\text{head}},
\end{equation}
with
\begin{equation}                                          
  \sigma_k(M_h) = \sqrt{\lambda_k(C_h)}
\end{equation}
The detailed derivation is presented in Appendix~\ref{ap:traceProf}. This reduces the computation from operating on a $d_\text{model}\times d_\text{model}$ matrix to a $d_\text{head}\times d_\text{head}$ eigenvalue problem, with cost $O(d_\text{head}^2 \cdot d_\text{model})$. For Qwen2.5-7B ($d_\text{head}=128$, $d_\text{model}=3584$), this is approximately $4.6\times 10^7$ operations versus $5.9\times 10^{10}$ for the full matrix construction.

\textbf{Group-level Classification}. 
For each KV Group $g$, every constituent Query Head $h$ independently evaluates its effective-rank from its per-head Kernel Attention Matrix $M_h$, computed using the shared group key projection $W_K^g$ and its own query projection $W_Q^h$. 
We aggregate these per-head ranks into a single group-level score by taking the \emph{mean} across all member heads in the group. Within each layer, KV groups are then ranked by this score, and the $k = \lceil (1-s),G \rceil$ groups with the lowest ER are labelled Retrieval Groups (where $s$ is the target sparsity and $G$ the number of KV groups), while the remainder are Streaming Groups. All Query Heads within a group inherit the same label, so the classification maps cleanly back to the Q-Head level without ambiguity.

\begin{algorithm}[t]
\caption{AoH Head Classification}
\label{alg:aoh}
\begin{algorithmic}[1]
\REQUIRE Frozen weights $\{W_Q^{(l)h}, W_K^{(l)h}\}$, layers $l \in [L]$, heads $h \in [H]$, budget $k$
\ENSURE Retrieval / Streaming head sets $\{\mathcal{R}^{(l)}, \mathcal{S}^{(l)}\}_{l=1}^{L}$
\FOR{$l = 1$ \textbf{to} $L$}
    \FOR{$h = 1$ \textbf{to} $H$}
        \STATE $QQ = W_Q^{(l)h} (W_Q^{(l)h})^\top \in \mathbb{R}^{d_\text{head} \times d_\text{head}}$
        \STATE $KK = W_K^{(l)h} (W_K^{(l)h})^\top \in \mathbb{R}^{d_\text{head} \times d_\text{head}}$
        \STATE $C_h^{(l)} = QQ \cdot KK$
            \hfill $\triangleright\ \lambda_k(C_h^{(l)}) = \sigma_k^2(M_h^{(l)})$
        \STATE $\sigma_k = \sqrt{\max(\operatorname{eig}(C_h^{(l)}),\; 0)}$
        \STATE $\hat{\sigma}_k = \frac{\sigma_k}{\sum_j \sigma_j}$
        \STATE $\operatorname{eff\_rank}^{(l)}(h) = \exp\!\left(-\sum_k \hat{\sigma}_k \log \hat{\sigma}_k\right)$
    \ENDFOR
    \STATE $\pi^{(l)} = \operatorname{argsort}_h[\operatorname{eff\_rank}^{(l)}(h)]$
        \hfill $\triangleright$ ascending order
    \STATE $\mathcal{R}^{(l)} = \bigl\{ h \in [H] \;\big|\; \operatorname{eff\_rank}^{(l)}(h) \leq \operatorname{eff\_rank}^{(l)}(\pi^{(l)}_k) \bigr\}$
        \hfill $\triangleright$ lowest-$k$ eff\_rank $\Rightarrow$ Retrieval
    \STATE $\mathcal{S}^{(l)} = [H] \setminus \mathcal{R}^{(l)}$
        \hfill $\triangleright$ remaining $\Rightarrow$ Streaming
\ENDFOR
\end{algorithmic}
\end{algorithm}

\section{Deploying LLMs With AoH}
\textbf{Head Classification}: Algorithm~\ref{alg:aoh} classifies each head as retrieval or streaming using the ER of its kernel attention matrix. For head $h$ in layer $l$, we compute the $d_\text{head}$-dimensional proxy $C_h^{(l)} = (W_Q^{(l)h} W_Q^{(l)h\top})(W_K^{(l)h} W_K^{(l)h\top})$, whose eigenvalues equal the squared nonzero singular values of $M_h^{(l)}$, and evaluate
\begin{equation}
\resizebox{\columnwidth}{!}{$
\operatorname{eff\_rank}^{(l)}(h)
=
\exp\!\left(
-\sum_k
\frac{\sqrt{\lambda_k(C_h^{(l)})}}{\sum_j \sqrt{\lambda_j(C_h^{(l)})}}
\log
\frac{\sqrt{\lambda_k(C_h^{(l)})}}{\sum_j \sqrt{\lambda_j(C_h^{(l)})}}
\right)
$}
\end{equation}

\textbf{Reordering}: Before deployment, we preprocess the model by reordering the output channels of the Query, Key, and Value projection weights according to the attention heads, which are sorted by ER in ascending order and the lowest-$k$ units are designated retrieval heads. The budget $k$ controls the retrieval/streaming ratio. This reordering groups retrieval heads and streaming heads, allowing for efficient slicing and concatenation operations when managing the KV cache for these two types of heads within a layer, rather than relying on scattering and gathering operations. For GQA models, the same decision is lifted to KV-group granularity to preserve grouped-cache efficiency; details are in Section~\ref{sec:GQA}. 

\textbf{Decode Stage}: Each layer applies its assigned strategy based on its head type, which uses layer-specific full K/V states for Retrieval Heads and only a fixed-size window cache plus sink cache for Streaming Heads. Each head type is computed independently; their outputs are concatenated along the head dimension and projected through a shared output matrix:
\begin{equation}
\resizebox{\columnwidth}{!}{$
    o^{(l)} =
    \operatorname{Concat}\!\left(
        \underbrace{\operatorname{Attn}\!\left(q_\mathcal{R}^{(l)},\mathrm{KV}_\mathcal{R}^{(l)}\right)}_{\text{Full Attn.}},
        \underbrace{\operatorname{Attn}\!\left(q_\mathcal{S}^{(l)},\mathrm{KV}_\mathcal{S}^{(l)}\right)}_{\text{SWA}}
    \right) W_O^{(l)}$}
\end{equation}
where $\operatorname{Concat}(\cdot)$ denotes concatenation along the head axis, followed by reshaping to $d_\text{model}$. 

\textbf{Chunked-Prefill Stage}: AoH is compatible with standard chunked
prefill~\cite{agrawal2023sarathi,kwon2023efficient} using
FlashAttention-2~\cite{dao2023flashattention}. Retrieval Heads attend to
the accumulated prefix and retain their layer-specific $O(T)$ KV states.
For Streaming Heads, after processing each chunk of size $K$, we evict all
but $s_\text{sink}$ sink tokens and $s_\text{recent}$ recent tokens.
Thus, each subsequent chunk attends to at most
$s_\text{sink}+s_\text{recent}$ tokens. For a sequence of length $T$,
this reduces the prefill cost from $O(T^2)$ to
$O(TK)$ for fixed cache sizes, while bounding persistent KV storage by
$O(s_\text{sink}+s_\text{recent})$ and peak working memory by
$O(K)$.

This design requires no specialized kernels beyond standard chunked FlashAttention and remains compatible with batched serving, which can further enhance LLM efficiency in serving scenarios with large sizes.

\section{Experiment}
\subsection{Setups}
\label{experiments}
\paragraph{\textbf{Datasets and models}} We evaluate on the LongBench~\cite{bai2024longbench}, a comprehensive long-context understanding suite covering 21 tasks across six categories; Appendix~\ref{ap:longbench} lists the task names. Each task is evaluated using its official metric, and we report the arithmetic mean over all 21 tasks as the primary aggregate score. The main models are Qwen2.5-7B~\cite{qwen2.5} (28 layers, 28 Q-heads, 4 KV-heads, GQA group size 7) and Qwen3-8B~\cite{yang2025qwen3} (36 layers, 32 Q-heads, 8 KV-heads).

{%
\setlength{\intextsep}{4pt plus 1pt minus 1pt}
\setlength{\textfloatsep}{4pt plus 1pt minus 1pt}
\setlength{\abovecaptionskip}{0pt}
\setlength{\belowcaptionskip}{2pt}
\begin{table}[!htbp]
\centering
\caption{KV Budget-matched setting for LongBench evaluation. H denotes Heads. RH denotes retrieval heads, and SH denotes streaming heads. Storage KV denotes the KV-cache footprint, while attended KV denotes the KV entries participating in attention computation.}
\label{tab:baseline_budget}
\resizebox{\columnwidth}{!}{%
\setlength{\tabcolsep}{5.0pt}
\begin{tabular}{l c c c}
\toprule
\textbf{Method} & \textbf{Sparse Policy}
& \textbf{Storage KV} & \textbf{Attended KV} \\
\midrule
Full Attention
& 32K tokens $\times$ 100\% H
& 100\% & 100\% \\
Quest& 32K tokens $\times$ 100\% H
& 100\% & $\approx$6.25\% \\
SnapKV& 16K tokens $\times$ 100\% H
& $\approx$50\% & $\approx$50\% \\
$H_2$O & 16K tokens $\times$ 100\% H
& $\approx$50\% & $\approx$50\% \\
DuoAttention & 32K $\times$ 50\% RH; 384 $\times$ 50\% SH
& $\approx$50.6\% & $\approx$50.6\% \\
AoH & 32K $\times$ 50\% RH; 384 $\times$ 50\% SH
& $\approx$50.6\% & $\approx$50.6\% \\
\bottomrule
\end{tabular}}
\vspace{-1em}
\end{table}
}

\begin{table*}[t]
\centering
\small
\caption{LongBench results on Qwen3-8B and Llama3.1-8B-Instruct. All AoH variants use a retrieval/streaming ratio of 50\%. Random$_{\mathrm{avg}}$ randomly selects the same number of retrieval heads, and Reverse keeps the highest ER heads as retrieval heads. AoH-RoPE uses a RoPE-aware ER score computed with relative RoPE rotations up to $\Delta=32\mathrm{K}$.}
\vspace{-1em}
\label{tab:main}
\resizebox{\textwidth}{!}{%
\setlength{\tabcolsep}{5pt}
\begin{tabular}{c | l c c c c c c c}
\toprule
\textbf{Model} & \textbf{Method} & \textbf{SQA} & \textbf{MQA} & \textbf{Summ} & \textbf{Fewshot} & \textbf{Synthetic} & \textbf{Code} & \cellcolor{gray!15}\textbf{Avg} \\
\midrule
\multirow{12}{*}{\rotatebox{90}{\large\textbf{Qwen3-8B}}}
& \multicolumn{8}{c}{\cellcolor{tabblue}\textit{Baselines}} \\
\cmidrule(lr){2-9}
& Full Attention (Dense)  & 25.19 & 15.97 & 21.96 & 63.49 & 66.67 & 57.38 & \cellcolor{gray!15}\textbf{39.10} \\
& SnapKV & 24.04 & 14.62 & 20.25 & 53.80 & 66.45 & 54.09 & \cellcolor{gray!15}36.11 \\
& Quest & 22.87 & 13.71 & 22.71 & 57.94 & 63.82 & 55.82 & \cellcolor{gray!15}36.76 \\
& $H_2$O (Eviction-based) & 17.80 & 13.71 & 21.05 & 59.92 & 59.63 & 57.69 & \cellcolor{gray!15}35.44 \\
& DuoAttention (Head-wise)    & 22.04 & 14.63 & 20.86 & 58.65 & 64.32 & 56.27 & \cellcolor{gray!15}36.68 \\
\cmidrule(lr){2-9}
 & \multicolumn{8}{c}{\cellcolor{tabpink}\textit{Ours}} \\
\cmidrule(lr){2-9}
 & AoH & 24.18 & 16.59 & 21.51 & 62.55 & 66.18 & 58.27 & \cellcolor{gray!15}\underline{38.78} \\
 & AoH-RoPE & 24.20 & 16.54 & 21.48 & 62.36 & 66.43 & 58.38 & \cellcolor{gray!15}38.78 \\
  & AoH-Random$_{avg}$ & 14.72 & 12.99 & 19.40 & 45.03 & 48.59 & 52.89 & \cellcolor{gray!15}29.53  \\
& AoH-Reverse & 10.61 & 11.87 & 17.73 & 39.15 & 11.96 & 48.56 & \cellcolor{gray!15}21.45 \\
 \midrule
\multirow{12}{*}{\rotatebox{90}{\shortstack{\large\textbf{Llama3.1-8B} \\ \large\textbf{-Instruct}}}}
& \multicolumn{8}{c}{\cellcolor{tabblue}\textit{Baselines}} \\
\cmidrule(lr){2-9}
& Full Attention (Dense) & 47.55 & 41.31 & 26.11 & 63.45 & 67.43 & 52.79 & \cellcolor{gray!15}\textbf{48.65} \\
& SnapKV & 42.98 & 41.07 & 24.89 & 55.03 & 67.83 & 51.16 & \cellcolor{gray!15}45.80 \\
& Quest & 44.94 & 37.10 & 25.54 & 60.86 & 62.97 & 52.47 & \cellcolor{gray!15}45.85 \\
& $H_2$O (Eviction-based) & 35.18 & 35.72 & 24.47 & 59.84 & 60.98 & 53.73 & \cellcolor{gray!15}43.39 \\
& DuoAttention (Head-wise) & 44.38 & 35.74 & 24.14 & 61.49 & 63.57 & 54.14 & \cellcolor{gray!15}45.81 \\
\cmidrule(lr){2-9}
 & \multicolumn{8}{c}{\cellcolor{tabpink}\textit{Ours}} \\
\cmidrule(lr){2-9}
& AoH & 46.54 & 39.28 & 25.94 & 61.85 & 64.95 & 54.58 & \cellcolor{gray!15}\underline{47.55} \\
& AoH-RoPE & 46.63 & 39.09 & 25.57 & 62.89 & 64.54 & 54.42 & \cellcolor{gray!15}47.58 \\
& AoH-Random$_{avg}$ & 19.71 & 23.83 & 20.29 & 48.04 & 31.83 & 52.88 & \cellcolor{gray!15}30.89 \\
& AoH-Reverse & 17.52 & 20.87 & 18.99 & 34.46 &  8.84 & 47.84 & \cellcolor{gray!15}23.31 \\
\bottomrule
\end{tabular}}
\vspace{-1.5em}
\end{table*}

\begin{figure*}[t]
    \centering
    \includegraphics[width=0.98\textwidth]{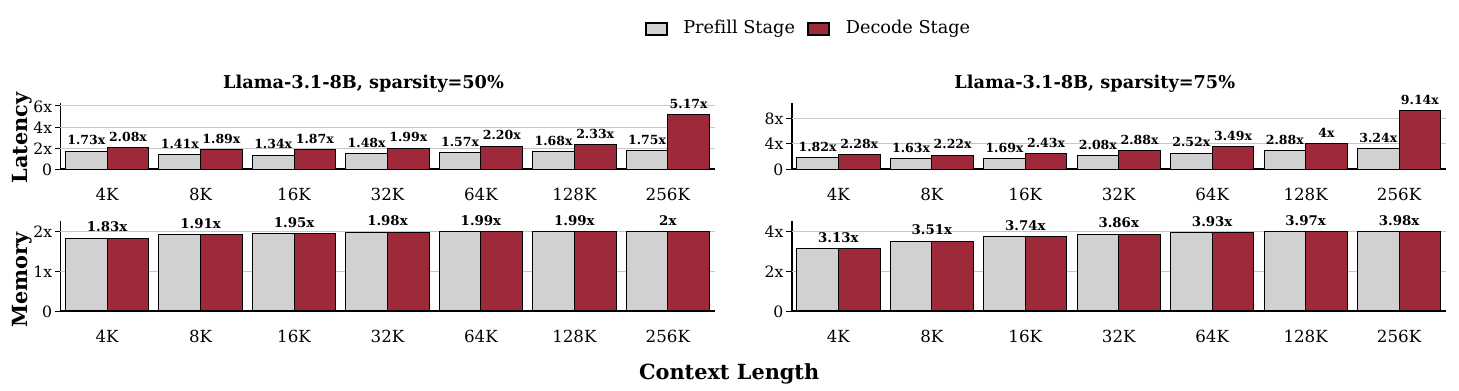}
    \vspace{-1em}
    \caption{AoH Prefill and Decoding Efficiency Across Context Lengths vs. Full Attention. Gray and red bars report AoH gains in the prefill and decode stages. The top row shows latency speedup, and the bottom row shows KV-cache memory reduction under 50\% and 75\% sparsity.}
    \label{fig:llama_efficiency_ratio}
    \vspace{-0.8em}
\end{figure*}

\begin{figure}[t]
    \centering
    \includegraphics[width=0.98\columnwidth]{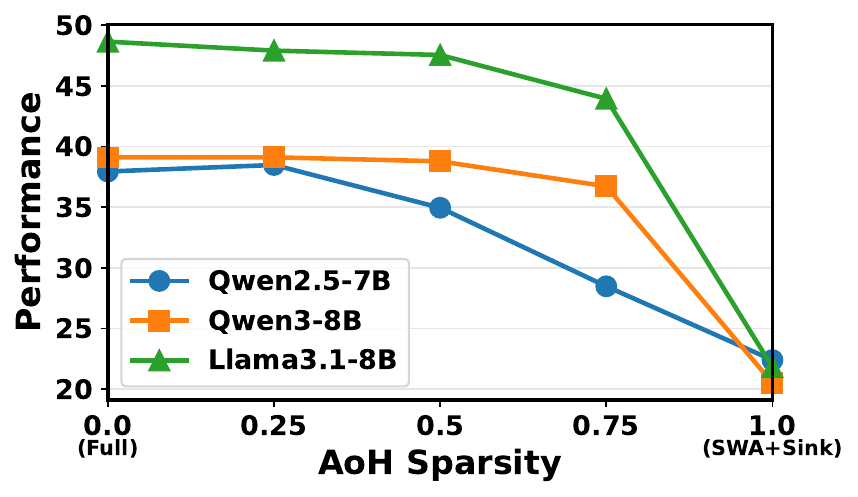}
      \vspace{-1.1em}
    \caption{Performance comparison of three LLMs under different AoH sparsity.}
    \label{fig:sparsity-Performance}
    \vspace{-1em}
\end{figure}

\begin{table}[t]
\centering
\vspace{-0.5em}
\caption{LongBench-Qasper results of AoH sink-size and recent-window sensitivity on Llama-3.1-8B-Instruct. Bold indicates the settings for our main experiment.}
\label{tab:aoh_sink_recent_qasper}
\resizebox{\linewidth}{!}{
\begin{tabular}{cc|cccc}
\toprule
Sink & Recent & sp=0.25 & \cellcolor{gray!15}sp=0.50 & sp=0.75 & sp=1.0 \\
\midrule
128 & 128 & 43.97 & \cellcolor{gray!15}43.15 & 37.01 & 10.13 \\
64  & 128 & 44.15 & \cellcolor{gray!15}41.53 & 35.57 & 11.96 \\
32  & 128 & 43.82 & \cellcolor{gray!15}40.75 & 34.68 & 10.84 \\
16  & 128 & 43.66 & \cellcolor{gray!15}41.00 & 35.61 & 10.53 \\
4   & 128 & 43.74 & \cellcolor{gray!15}41.37 & 35.63 & 10.95 \\
\midrule
\textbf{128} & \textbf{256} & \textbf{43.86} & \cellcolor{gray!15}\textbf{43.34} & \textbf{39.20} & \textbf{13.53} \\
64  & 256 & 44.55 & \cellcolor{gray!15}41.60 & 38.29 & 14.39 \\
32  & 256 & 44.16 & \cellcolor{gray!15}41.21 & 37.73 & 13.12 \\
16  & 256 & 43.89 & \cellcolor{gray!15}41.40 & 37.20 & 13.42 \\
4   & 256 & 44.28 & \cellcolor{gray!15}41.71 & 39.54 & 14.14 \\
\midrule
128 & 512 & 44.53 & \cellcolor{gray!15}43.54 & 42.95 & 17.41 \\
64  & 512 & 44.01 & \cellcolor{gray!15}42.42 & 39.52 & 16.21 \\
32  & 512 & 44.44 & \cellcolor{gray!15}41.56 & 38.39 & 16.91 \\
\bottomrule
\vspace{-2em}
\end{tabular}
}
\end{table}

\paragraph{\text{Implementation Details and baseline}} 
We implement AoH in PyTorch~\cite{paszke2019pytorch} with RoPE~\cite{su2024roformer}, RMSNorm~\cite{RMSN}, Eager Attention~\cite{vaswani2017attention, wolf-etal-2020-transformers}.
Unless otherwise stated, streaming heads use sink size $s_\text{sink}=128$ and recent-window size $s_\text{recent}=256$.
We compare AoH with Full Attention, $H_2O$~\cite{zhang2023h2o}, SnapKV~\cite{snapkv}, Quest~\cite{quest}, DuoAttention~\cite{duoattention}, and two controlled head-selection baselines: Random$_{\mathrm{avg}}$ and Reverse. For all baseline methods, we ensure a fair comparison under comparable KV-cache costs. Table~\ref{tab:baseline_budget} summarizes the KV-cache storage and attended-KV budgets used by each method. Except for Quest, which uses a small attended-token budget but still stores the full KV cache, the sparse baselines are configured with comparable storage and attention budgets. All LongBench evaluations use a maximum sequence length of 32,768 and greedy decoding.

\begin{table}[t]
\centering
\small
\setlength{\tabcolsep}{6pt}
\caption{Ablation of spectral metrics for head classification on LongBench with Llama-3.1-8B-Instruct at 50\% sparsity. ER denotes effective-rank, the metric used by AoH. The best result in each row is \textbf{bolded}.}
\vspace{-1em}
\label{tab:kernel_spectral_metric_ablation}
\begin{tabular}{lcccc}
\toprule
Metric & Frobenius & Spectral & Stable & \cellcolor{gray!15}\textbf{ER (Ours)} \\
\midrule
SQA       & 33.55 & 36.46 & 23.02 & \cellcolor{gray!15}\textbf{46.54} \\
MQA       & 28.97 & 35.29 & 27.85 & \cellcolor{gray!15}\textbf{39.28} \\
Summ      & 22.40 & 23.03 & 21.17 & \cellcolor{gray!15}\textbf{25.94} \\
Fewshot   & 50.11 & 54.46 & 56.04 & \cellcolor{gray!15}\textbf{61.85} \\
Synthetic & 64.20 & 42.99 & 22.22 & \cellcolor{gray!15}\textbf{64.95} \\
Code      & 54.14 & 52.78 & 51.30 & \cellcolor{gray!15}\textbf{54.58} \\
\midrule
Avg       & 40.05 & 39.59 & 32.46 & \cellcolor{gray!15}\textbf{47.55} \\
\bottomrule
\end{tabular}
\vspace{-0.8em}
\end{table}

\begin{table}[t]
\centering
\small
\setlength{\tabcolsep}{6pt}
\caption{Ablation of GQA group-level aggregation strategies on LongBench with Llama-3.1-8B-Instruct at 50\% sparsity. The best result is \textbf{bolded}.}
\vspace{-1em}
\label{tab:gqa_aggregation_ablation}
\begin{tabular}{lccc}
\toprule
Metric & GQA-Max & GQA-Min & GQA-Mean(ours) \\
\midrule
SQA       & 44.94 & 45.90 & \textbf{46.54} \\
MQA       & 38.44 & 38.03 & \textbf{39.28} \\
Summ      & 24.38 & 25.26 & \textbf{25.94} \\
Fewshot   & 56.72 & \textbf{63.47} & 61.85 \\
Synthetic & 66.20 & \textbf{66.35} & 64.95 \\
Code      & 52.86 & \textbf{54.91} & 54.58 \\
\midrule
Avg       & 45.82 & \textbf{47.60} & 47.55 \\
\bottomrule
\end{tabular}
\vspace{-1.8em}
\end{table}

\begin{table}[t]
\centering
\setlength{\tabcolsep}{2pt}
\small
\caption{RoPE-aware AoH ranking stability. We report the mean layer-wise Spearman correlation between vanilla AoH scores and RoPE-aware AoH scores under different maximum relative distances.}
\vspace{-0.8em}
\begin{tabular}{lccccccc}
\toprule
\textbf{Model} & \textbf{1K} & \textbf{4K} & \textbf{8K} & \textbf{16K} & \textbf{32K} & \textbf{64K} & \textbf{128K} \\
\midrule
Qwen2.5-7B     & 0.992 & 0.991 & 0.991 & 0.990 & 0.990 & 0.990 & 0.989 \\
Qwen3-8B       & 0.987 & 0.987 & 0.986 & 0.986 & 0.985 & 0.984 & 0.984 \\
Llama3.1-8B    & 0.990 & 0.989 & 0.988 & 0.987 & 0.987 & 0.986 & 0.985 \\
\bottomrule
\end{tabular}
\vspace{-1.8em}
\label{tab:rope_aware_aoh}
\end{table}

\subsection{Main Results}

Table~\ref{tab:main} reports LongBench results on Qwen3-8B\footnote{\url{https://huggingface.co/Qwen/Qwen3-8B}} and Llama3.1-8B-Instruct\footnote{\url{https://huggingface.co/meta-llama/Llama-3.1-8B-Instruct}}, with Qwen2.5-7B\footnote{\url{https://huggingface.co/Qwen/Qwen2.5-7B}} results provided in Appendix~\ref{app:additional_longbench}. With 50\% sparsity, AoH achieves average scores of 38.78 and 47.55 on the two main models, respectively, remaining close to Full Attention while reducing KV budgets by half. The gap to dense attention is small: AoH trails Full Attention by only 0.32 points on Qwen3-8B and 1.10 points on Llama3.1-8B. AoH also outperforms the sparse baselines in average score across all models. The Random and Reverse ablations further validate the ER criterion: under the same sparsity budget, AoH substantially outperforms both random head selection and reversed ER selection. These results show that low-rank query-key geometry identifies heads that are especially important for preserving long-context performance.


\subsection{Efficiency Results}


AoH reduces KV-cache memory nearly proportionally to the fraction of streaming heads, and its latency benefits become more pronounced at longer contexts where attention and cache access dominate runtime. As shown in Figure~\ref{fig:llama_efficiency_ratio} and Table~\ref{tab:efficiency_llama}, at 75\% sparsity, AoH achieves up to $3.24\times$ prefill speedup and $9.14\times$ decode speedup at 256K context, while reducing KV-cache memory by up to $3.98\times$ on Llama3.1-8B. Additional efficiency results on Qwen2.5-7B and Qwen3-8B are provided in Appendix~\ref{ap:additional_efficiency}.

\subsection{Ablation Studies}


\paragraph{Ablation of head ordering.}
We first isolate whether AoH's ER ordering provides a meaningful head-function prior. We compare AoH with two controlled alternatives under the same sparsity budget. Random$_{\mathrm{avg}}$ selects the same number of retrieval heads uniformly at random, while Reverse flips the AoH ordering by treating high ER heads as retrieval heads. Table~\ref{tab:main} shows the quality of the head-ranking criterion. We also show the detailed results in Section ~\ref{ap:additional_Ablation}.

\paragraph{Ablation of hyperparameters.}
We ablate the main hyperparameters: the sparsity ratio and the streaming-cache size.
As shown in Figure~\ref{fig:sparsity-Performance}, AoH achieves the best trade-off between performance and computational overhead at around 50\% sparsity.
We further study the sensitivity to sink size and recent-window size. As shown in Table~\ref{tab:aoh_sink_recent_qasper}, AoH is relatively robust to these streaming-cache hyperparameters across a range of sparsity levels. We choose a sink size of 128 and a recent-window size of 256, which provides a good balance between performance and KV-cache reduction.



\paragraph{Ablation of Kernel spectral metric.}
We further compare effective-rank~\cite{roy2007effective} with three data-free spectral baselines computed from the same frozen query-key kernel matrix. For each attention head, we form $M_h = W_K^{h\top}W_Q^h$, or equivalently compute its singular spectrum from the low-dimensional Gram proxy $(W_Q^h W_Q^{h\top})(W_K^h W_K^{h\top})$. We then derive three alternative head-selection scores: the Frobenius norm $\|M_h\|_F$, which measures the overall kernel energy; the spectral norm $\|M_h\|_2=\sigma_1$, which measures the strongest query-key matching direction; and the stable rank $\frac{\|M_h\|_F^2}{\|M_h\|_2^2}$ ~\cite{rudelson2007sampling}, which estimates spectral dimensionality while emphasizing dominance by the largest singular value. Under the same retrieval-head budgets as AoH, we rank heads within each layer according to each metric and generate the corresponding retrieval/streaming heads. As shown in Table~\ref{tab:kernel_spectral_metric_ablation}, effective-rank (Ours) achieves the best average score. This indicates that AoH benefits from measuring spectral concentration rather than simply selecting heads with large kernel magnitude or a single dominant singular direction.

\paragraph{Ablation of GQA group-level aggregation.}
We also ablate how query-head scores are aggregated into GQA group-level decisions. As shown in Table~\ref{tab:gqa_aggregation_ablation}, mean and min aggregation perform nearly identically, with average scores of 47.55 and 47.60, respectively. The small difference suggests that AoH is not sensitive to this choice. In contrast, max aggregation is noticeably worse, dropping to 45.82 on average. This indicates that max aggregation is too coarse: a single high-rank query head can dominate the group score and obscure lower-rank retrieval-oriented heads within the same KV group. We therefore use mean aggregation as the default because it is stable, simple, and less sensitive to individual outlier heads.

\paragraph{Ablation of AoH-RoPE.}
\label{Ablation of AoH-RoPE}
Vanilla AoH computes the ER of each head from the frozen query-key kernel without explicitly inserting positional rotations.
To examine whether RoPE changes the head-level ordering used by AoH, we construct a RoPE-aware variant.
For a relative distance $\Delta$, we insert the RoPE relative rotation $R_{\Delta}$ into the head-specific QK kernel and compute
\begin{equation}
\resizebox{\columnwidth}{!}{%
$\displaystyle
C_{h,\Delta}^{(l)}
=
\left(R_{\Delta} W_Q^{(l)h} W_Q^{(l)h\top} R_{\Delta}^{\top}\right)
\left(W_K^{(l)g(h)} W_K^{(l)g(h)\top}\right)
$%
}
\end{equation}
where $h$ denotes a query head in layer $l$ and $g(h)$ denotes its corresponding KV group.
We compute the ER of $C_{h,\Delta}^{(l)}$ as the RoPE-aware AoH score.
For each maximum relative distance $D$, we average scores over logarithmically spaced $\Delta \le D$, compare the resulting head ordering with vanilla AoH using layer-wise Spearman correlation, and report the mean across layers. 

As shown in Table~\ref{tab:rope_aware_aoh}, vanilla AoH and RoPE-aware AoH produce highly consistent head rankings across all tested models and distance ranges. The mean layer-wise Spearman correlation remains above $0.98$ even at 128K relative distance. As shown in Table~\ref{tab:main}, vanilla AoH performs comparably to AoH-RoPE on both Qwen3-8B and Llama-3.1-8B-Instruct. This suggests that although RoPE changes the relative-position phase of query-key interactions, it largely preserves the head-level spectral ordering exploited by AoH.

\section{Conclusion}

We presented Autonomy-of-Heads (AoH), a data-free and training-free method for identifying retrieval and streaming heads from frozen query-key geometry. By analyzing the effective-rank of the kernel attention matrix $M_h = W_K^{h\top}W_Q^h$, AoH replaces calibration data and learned gates with a one-time weight-space classifier. Across multiple long-context LLMs, AoH preserves 96.5\% Full Attention performance on average at 50\% sparsity and consistently outperforms random, reversed head selection and trainging-free baselines. On the hardware side, AoH achieves up to $3.24\times$ prefill speedup and $9.14\times$ decode speedup at 256K context on Llama3.1-8B, while reducing KV-cache memory by up to $3.98\times$ under 75\% sparsity. More broadly, AoH provides a simple and extensible head prior for sparse attention, and can be combined with existing KV-cache compression or token-selection methods to improve long-context inference without additional training.

\section*{Limitations}
Our evaluation focuses on decoder-only LLMs with standard attention variants, including GQA-based models. Although the effective-rank criterion is simple and architecture-agnostic at the level of query-key projections, we have not exhaustively tested it on encoder-decoder models, multimodal LLMs, or models with heavily modified attention mechanisms.

AoH uses a fixed sparsity budget to split heads into retrieval and streaming groups. In this work, the budget is selected manually and kept constant across layers for simplicity. A more adaptive budget, potentially varying by layer, model, or deployment constraint, may further improve the accuracy--efficiency trade-off. We leave automatic budget selection and finer-grained head policies for future work.


\bibliography{custom}

\subsection{Appendices}
\appendix

\section{Further discussion on the kernel attention matrix $M_h$}
\label{Further discussion on M-h}
The head-specific part of attention is encoded in the frozen query-key projections. During decoding, the attention score of head $h$ for a query token can be written as $\text{scores}_{h,i}=X_\text{ctx} W_K^{h\top} W_Q^h x_i$, where the context matrix $X_\text{ctx}$ and query vector $x_i$ are shared across heads, while the middle operator $W_K^{h\top}W_Q^h$ is head-specific. We therefore define the \textbf{kernel attention matrix} $M_h = W_K^{h\top}W_Q^h$, a frozen query-key matching operator whose spectral geometry summarizes how head $h$ maps query-side information demand to key-side information supply. Intuitively, the spectrum of $M_h$ describes how many dominant query-key matching directions a head relies on. A concentrated spectrum indicates that a few stable matching directions dominate the head's behavior, which is consistent with retrieval heads that search globally for relevant content. A diffuse spectrum indicates that no small set of global matching directions dominates, which is consistent with streaming heads that mainly rely on sink and recent tokens.

\begin{figure*}[t]
    \begin{center}
    \centerline{\includegraphics[width=\textwidth,trim=13 15 45 45,clip]{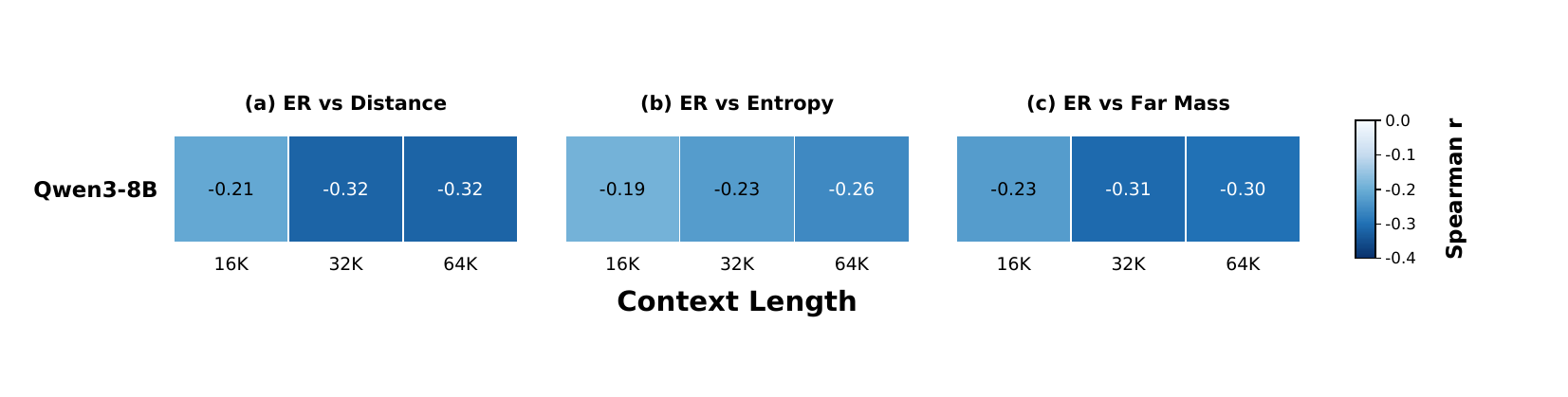}}
     \vspace{-1.8em}
     \caption{\label{fig:er_behavior}Correlation between effective-rank and empirical attention behavior. Negative correlations indicate that higher-ER heads are more local, while lower-ER heads are more associated with long-range attention.}
    \end{center}
\vspace{-1.5em}
\end{figure*}

\section{Additional Observations}
\label{Additional Observations}
To test whether this association also appears beyond Qwen2.5-7B, we quantify it on Qwen3-8B in Figure~\ref{fig:er_behavior}. For each layer, we compute Spearman correlations across heads between effective-rank and three empirical behavior metrics: average attended distance, attention entropy, and far-token mass. Distance and entropy exclude sink tokens, and far-token mass measures attention probability assigned to tokens farther than 8K positions. On Qwen3-8B under 16K--64K contexts, effective-rank is consistently negatively correlated with long-range attention behavior, indicating that frozen query-key geometry provides a meaningful signal of head function.

\section{AoH Forms the Upper Envelope}
\label{UpperEnvelope}
Shown in Figure~\ref {fig:teaser}, AoH forms the upper envelope of the accuracy--sparsity trade-off, with all baseline methods lying below the AoH curve under comparable sparsity budgets. 

\begin{figure}[t]
\includegraphics[width=0.98\columnwidth]{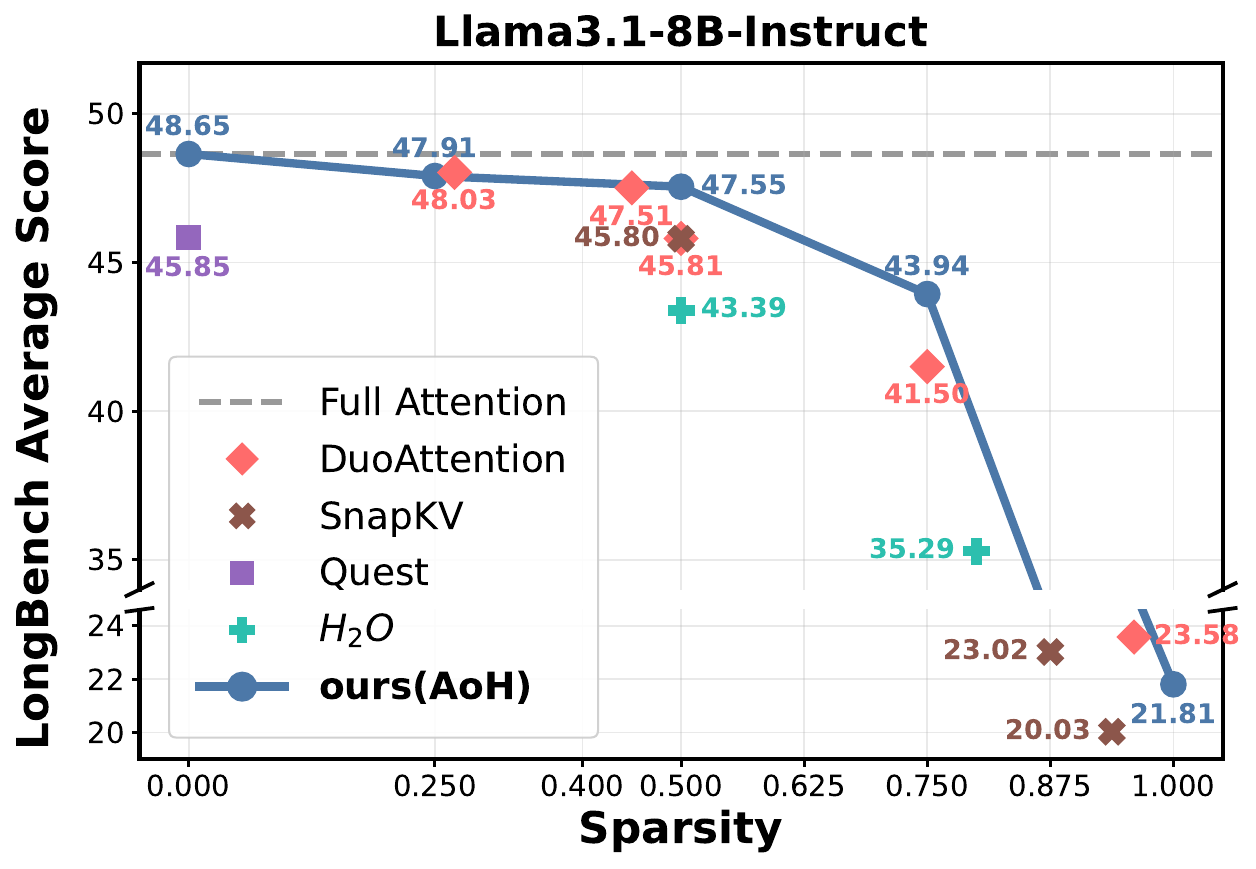}
    \vspace{-0.8em}
 \caption{Accuracy-efficiency trade-off of AoH.}
    \label{fig:teaser}
\end{figure}

\section{LongBench Introduction}
\label{ap:longbench}
We evaluate AoH on the LongBench~\cite{bai2024longbench}, a comprehensive long-context understanding suite covering 21 tasks across six categories: single-document QA (narrativeqa, qasper, multifieldqa\_en, multifieldqa\_zh), multi-document QA (hotpotqa, 2wikimqa, musique, dureader), summarization (gov\_report, qmsum, multi\_news, vcsum), few-shot learning (trec, triviaqa, lsht, samsum), synthetic retrieval (passage\_count, passage\_retrieval\_en, passage\_retrieval\_zh), and code completion (lcc, repobench-p). 

\section{Details on Baseline Budgets}
Table~\ref{tab:longbench_kv_memory} reports the persistent KV-cache memory used by each baseline at the maximum 32K LongBench context length with BF16 KV caches. Full Attention stores all KV states and defines the 100\% reference budget. Quest has the same KV storage footprint because it retains the full cache, although only a subset of entries participates in attention computation. SnapKV and $H_2$O allocate a 16K-token cache for all heads, corresponding to roughly 50\% of the full KV storage. For head-wise methods, DuoAttention and all AoH variants keep full 32K caches for 50\% retrieval heads and only a 512-token sink/recent cache for the remaining streaming heads, giving a storage budget of approximately 50.6\%. Since AoH, AoH-Random, and AoH-Reverse use identical KV budgets, their differences in LongBench accuracy reflect the quality of the retrieval-head assignment.

\begin{table}[t]
\centering
\caption{KV-cache memory footprint under the 32K LongBench setting. KV memory is computed for the maximum 32K context using BF16 KV caches.}
\label{tab:longbench_kv_memory}
\resizebox{\columnwidth}{!}{%
\setlength{\tabcolsep}{6pt}
\begin{tabular}{l c c c}
\toprule
\textbf{Method}
& \textbf{Qwen2.5-7B KV}
& \textbf{Qwen3-8B KV}
& \textbf{Llama3.1-8B KV} \\
\midrule
Full Attention & 1.75 GB & 4.50 GB & 4.00 GB \\
Quest & 1.75 GB & 4.50 GB & 4.00 GB \\
SnapKV & 0.875 GB & 2.25 GB & 2.00 GB \\
$H_2$O & 0.875 GB & 2.25 GB & 2.00 GB \\
DuoAttention & 0.885 GB & 2.276 GB & 2.023 GB \\
AoH (sp=50\%) & 0.885 GB & 2.276 GB & 2.023 GB \\
AoH-Random (sp=50\%) & 0.885 GB & 2.276 GB & 2.023 GB \\
AoH-Reverse (sp=50\%) & 0.885 GB & 2.276 GB & 2.023 GB \\
\bottomrule
\end{tabular}}
\end{table}



\section{Additional LongBench Results}
\label{app:additional_longbench}

Table~\ref{tab:qwen25_results_appendix} shows the additional LongBench results on Qwen2.5-7B. 
We also include a complementary MoE evaluation to examine whether AoH remains applicable beyond dense transformer models.
\paragraph{\textbf{Experimental setup for MoE models.}}
We further evaluate AoH on LongBench using Qwen3-30B-A3B-Instruct-2507, a Mixture-of-Experts (MoE) model with 30B total parameters and 3B activated parameters per token. Compared with dense models, MoE architectures introduce an additional architectural component through expert routing; however, AoH operates only on the attention layers, where it compresses and allocates the KV cache at the attention-head level. We keep all hyperparameters, including attention sink tokens and window size, identical to the default AoH configuration used in the main LongBench table, without MoE-specific tuning.

\paragraph{\textbf{Reults for MoE models.}} 
As shown in Table~\ref{tab:qwen3_moe_longbench}, AoH remains effective on the MoE model: with only 50\% KV heads kept, it achieves an average LongBench score of 48.79, close to the Full Attention score of 50.95, while preserving comparable performance on the code category.

\begin{figure}[t]
    \centering
    \includegraphics[width=1.05\columnwidth]{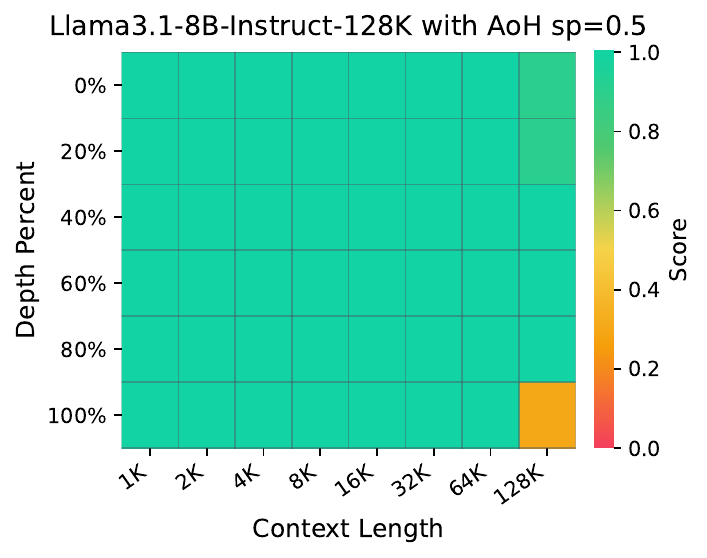}
    \caption{Single-needle passkey retrieval accuracy of AoH at 50\% sparsity across context lengths and insertion depths.}
    \label{fig:passkey_sp05ablation}
\end{figure}

\begin{figure}[t]
    \centering
    \includegraphics[width=1.05\columnwidth]{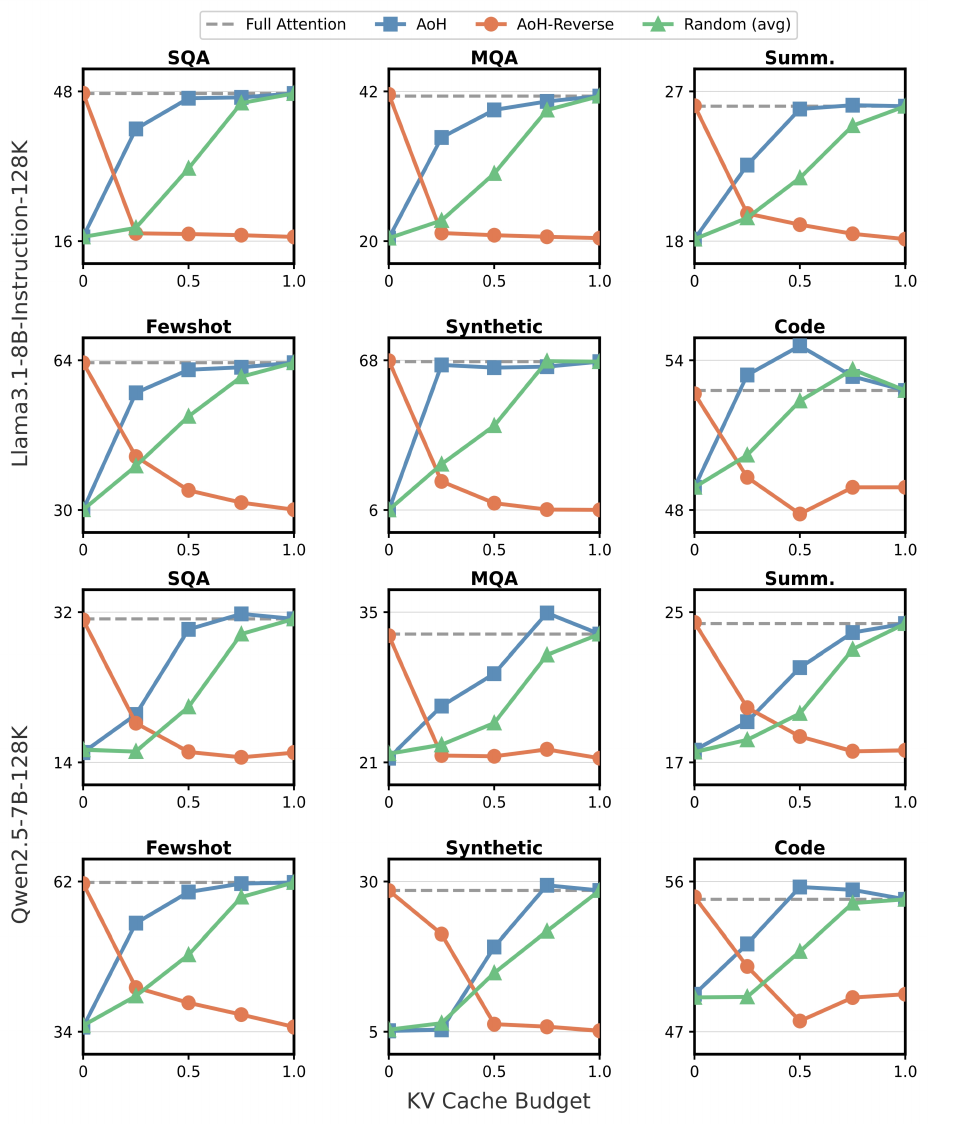}
    \caption{Ablation of AoH head classification under different KV-cache budgets on LongBench. We compare AoH with Full Attention, Random$_{\mathrm{avg}}$, and Reverse.}
    \label{fig:ablation}
\end{figure}

\begin{table*}[t]
\centering
\small
\caption{LongBench results on Qwen3-30B-A3B-Instruct-2507 (MoE).}
\label{tab:qwen3_moe_longbench}
\resizebox{\textwidth}{!}{%
\setlength{\tabcolsep}{5pt}
\begin{tabular}{l c c c c c c c}
\toprule
\textbf{Method} & \textbf{SQA} & \textbf{MQA} & \textbf{Summ} & \textbf{Fewshot} & \textbf{Synthetic} & \textbf{Code} & \cellcolor{gray!15}\textbf{Avg} \\
\midrule
 Full & 49.03 & 44.34 & 21.99 & 66.78 & 72.00 & 62.62 & \cellcolor{gray!15}\textbf{50.95} \\
 AoH ($sp=50\%$) & 44.89 & 42.01 & 20.55 & 64.75 & 70.17 & 62.66 & \cellcolor{gray!15}48.79 \\
\bottomrule
\end{tabular}}
\end{table*}

\begin{table*}[t]
\centering
\small
\caption{LongBench results on Qwen2.5-7B. All AoH variants use a retrieval/streaming ratio of 0.5 unless otherwise stated. $Random_{\mathrm{avg}}$ randomly selects the same number of retrieval heads, and Reverse keeps the highest effective-rank heads as retrieval heads. AoH-RoPE-aware uses a RoPE-aware effective-rank score computed with relative RoPE rotations up to $\Delta=32\mathrm{K}$. sp refers to the sparsity ratio: $\mathrm{sparsity}=1-\frac{N_{\mathrm{full}}}{N_{\mathrm{total}}}$.}
\label{tab:qwen25_results_appendix}
\resizebox{\textwidth}{!}{%
\setlength{\tabcolsep}{6pt}
\begin{tabular}{l c c c c c c c}
\toprule
\textbf{Method} & \textbf{SQA} & \textbf{MQA} & \textbf{Summ} & \textbf{Fewshot} & \textbf{Synthetic} & \textbf{Code} & \cellcolor{gray!15}\textbf{Avg} \\
\midrule
\multicolumn{8}{c}{\cellcolor{tabblue}\textit{Baselines}} \\
\cmidrule(lr){1-8}
Full Attention (Dense) & 31.19 & 33.45 & 24.39 & 61.83 & 28.51 & 54.93 & \cellcolor{gray!15}\textbf{37.94} \\
SnapKV & 28.98 & 23.66 & 22.47 & 51.42 & 29.47 & 52.02 & \cellcolor{gray!15}33.27 \\
Quest & 28.75 & 29.92 & 22.00 & 58.62 & 20.37 & 51.34 & \cellcolor{gray!15}34.33 \\
H$_2$O & 21.16 & 25.62 & 20.96 & 55.38 & 10.19 & 51.19 & \cellcolor{gray!15}29.79 \\
DuoAttention (Head-wise) & 21.78 & 21.97 & 18.31 & 52.68 & 6.14 & 50.48 & \cellcolor{gray!15}27.54 \\
\cmidrule(lr){1-8}
\multicolumn{8}{c}{\cellcolor{tabpink}\textit{Ours}} \\
\cmidrule(lr){1-8}
AoH (sp=50\%) & 29.93 & 29.28 & 22.05 & 60.04 & 19.11 & 55.68 & \cellcolor{gray!15}\underline{34.95} \\
AoH-RoPE-aware(sp=50\%)  & 29.88 & 29.60 & 22.13 & 59.41 & 18.66 & 55.31 & \cellcolor{gray!15}34.79 \\
$AoH-Random_{\mathrm{avg}}$(sp=50\%) & 17.22 & 23.79 & 18.55 & 45.12 & 5.81 & 50.39 & \cellcolor{gray!15}25.57 \\
$AoH-Reverse$(sp=50\%) & 15.53 & 21.94 & 18.41 & 38.93 & 6.56 & 47.51 & \cellcolor{gray!15}23.52 \\
\bottomrule
\end{tabular}}
\end{table*}

\section{Additional PassKeyRetrieval Results}
\label{app:additional_PassKeyRetrieval}
Figure~\ref{fig:passkey_sp05ablation} shows the single-needle passkey retrieval accuracy of AoH at 50\% sparsity across context lengths from 1K to 128K and insertion depths from 0\% to 100\%. AoH maintains near-perfect retrieval accuracy across almost all settings, with degradation only at the most challenging 128K context and 100\% insertion depth.

\section{Additional Efficiency Results}
\label{ap:additional_efficiency}

\paragraph{Measurement protocol.}
To isolate the effect of head-level sparsity from low-level kernel engineering, the
decode latencies reported here are measured under a deliberately simple attention
backend.
Both Full and AoH use \emph{eager} attention~\cite{vaswani2017attention, wolf-etal-2020-transformers}, i.e.\ vanilla PyTorch $\mathrm{softmax}(\frac{QK^\top}{\sqrt{d}})V$ (with the softmax accumulated in FP32 and cast
back to bf16). All numbers are single-sequence ($\mathrm{batchsize}=1$), single-token decode latencies in
bf16.

\paragraph{Results}
Tables~\ref{tab:efficiency_llama}, \ref{tab:appendix_efficiency_qwen3}, and \ref{tab:appendix_efficiency_qwen25} provide the full efficiency results for AoH on LLaMA3.1-8B, Qwen3-8B, and Qwen2.5-7B, including prefill latency, decode latency, and KV-cache memory.

\section{Additional Ablation Results}
\label{ap:additional_Ablation}

\paragraph{Ablation of head ordering.} Figure~\ref{fig:ablation} shows that AoH consistently preserves stronger LongBench performance across task categories and KV-cache budgets. Random$_{\mathrm{avg}}$ performs substantially worse, showing that sparse attention is sensitive to which heads retain global context. Reverse is usually the weakest variant, confirming that the direction of the effective-rank criterion matters: low-rank query-key geometry identifies retrieval-oriented heads, whereas high-rank heads are better treated as streaming heads. Together with the aggregate results in Table~\ref{tab:main}, this ablation shows the quality of the head-ranking criterion.

\paragraph{Hyperparameter sensitivity on ShortBench.}
We also use MMLU as a representative short-context knowledge task. Shown in Tables~\ref{tab:aoh_sink_recent_mmlu}, AoH is not sensitive to these hyperparameters. On MMLU, accuracy is almost unchanged across all combinations and sparsity levels, showing clear robustness in the short-context setting. 


\begin{table}[t]
\centering
\vspace{-0.8em}
\caption{ShortBench-MMLU accuracy of AoH sink-size and recent-window sensitivity on Llama-3.1-8B-Instruct.}
\label{tab:aoh_sink_recent_mmlu}
\resizebox{\linewidth}{!}{
\begin{tabular}{cc|cccc}
\toprule
Sink & Recent & sp=0.25 & sp=0.50 & sp=0.75 & sp=1.0 \\
\midrule
128 & 128 & 0.600 & 0.595 & 0.605 & 0.600 \\
64  & 128 & 0.600 & 0.595 & 0.605 & 0.600 \\
32  & 128 & 0.600 & 0.595 & 0.605 & 0.600 \\
16  & 128 & 0.600 & 0.595 & 0.605 & 0.600 \\
4   & 128 & 0.600 & 0.595 & 0.605 & 0.600 \\
\midrule
128 & 256 & 0.600 & 0.595 & 0.605 & 0.600 \\
64  & 256 & 0.600 & 0.595 & 0.605 & 0.600 \\
32  & 256 & 0.600 & 0.595 & 0.605 & 0.600 \\
16  & 256 & 0.600 & 0.595 & 0.605 & 0.600 \\
4   & 256 & 0.600 & 0.595 & 0.605 & 0.600 \\
\midrule
128 & 512 & 0.600 & 0.595 & 0.605 & 0.600 \\
64  & 512 & 0.600 & 0.595 & 0.605 & 0.600 \\
32  & 512 & 0.600 & 0.595 & 0.605 & 0.600 \\
\bottomrule
\end{tabular}
}
\end{table}

  \begin{table*}[t]
    \centering
    \small
    \vspace{-1.5em}
    \caption{Prefill/decode latency and KV memory of AoH on LLaMA-3.1-8B across sparsity levels and context lengths. Speedups and memory reduction are relative to Full Attention.}
    \label{tab:efficiency_llama}
    \resizebox{\textwidth}{!}{%
    \setlength{\tabcolsep}{3pt}
    \begin{tabular}{c r r c r r c r r c r r c}
    \toprule
    \multirow{2}{*}{\textbf{Ctx Len}}
      & \multicolumn{3}{c}{\textbf{Prefill Latency (s, total)}}
      & \multicolumn{3}{c}{\textbf{Prefill Mem (GB)}}
      & \multicolumn{3}{c}{\textbf{Decode Latency (ms/tok)}}
      & \multicolumn{3}{c}{\textbf{Decode Mem (GB)}} \\
    \cmidrule(lr){2-4}\cmidrule(lr){5-7}\cmidrule(lr){8-10}\cmidrule(lr){11-13}
      & \textbf{Full} & \textbf{AoH} & \cellcolor{blue!8}\textbf{Spd$\uparrow$}
      & \textbf{Full} & \textbf{AoH} & \cellcolor{gray!15}\textbf{Mem$\uparrow$}
      & \textbf{Full} & \textbf{AoH} & \cellcolor{blue!8}\textbf{Spd$\uparrow$}
      & \textbf{Full} & \textbf{AoH} & \cellcolor{gray!15}\textbf{Mem$\uparrow$} \\
    \midrule
    \multicolumn{13}{c}{\textit{LLaMA-3.1-8B (sparsity=25\%)}} \\
    \midrule
    4K   & 1.382 & 0.870 & \cellcolor{blue!8}1.59$\times$ & 0.537 & 0.415 & \cellcolor{gray!15}1.29$\times$ & 22.00 & 11.55 &
    \cellcolor{blue!8}1.90$\times$ & 0.540 & 0.420 & \cellcolor{gray!15}1.29$\times$ \\
    8K   & 2.632 & 2.165 & \cellcolor{blue!8}1.22$\times$ & 1.074 & 0.818 & \cellcolor{gray!15}1.31$\times$ & 23.65 & 14.45 &
    \cellcolor{blue!8}1.64$\times$ & 1.076 & 0.820 & \cellcolor{gray!15}1.31$\times$ \\
    16K  & 6.307 & 5.703 & \cellcolor{blue!8}1.11$\times$ & 2.148 & 1.623 & \cellcolor{gray!15}1.32$\times$ & 31.55 & 20.87 &
    \cellcolor{blue!8}1.51$\times$ & 2.150 & 1.625 & \cellcolor{gray!15}1.32$\times$ \\
    32K  & 19.592 & 18.097 & \cellcolor{blue!8}1.08$\times$ & 4.295 & 3.234 & \cellcolor{gray!15}1.33$\times$ & 53.20 & 35.24 &
    \cellcolor{blue!8}1.51$\times$ & 4.298 & 3.236 & \cellcolor{gray!15}1.33$\times$ \\
    64K  & 68.447 & 62.124 & \cellcolor{blue!8}1.10$\times$ & 8.590 & 6.455 & \cellcolor{gray!15}1.33$\times$ & 99.53 & 62.26 &
    \cellcolor{blue!8}1.60$\times$ & 8.593 & 6.457 & \cellcolor{gray!15}1.33$\times$ \\
    128K & 259.748 & 219.358 & \cellcolor{blue!8}1.18$\times$ & 17.180 & 12.898 & \cellcolor{gray!15}1.33$\times$ & 199.55 & 122.47 &
    \cellcolor{blue!8}1.63$\times$ & 17.183 & 12.900 & \cellcolor{gray!15}1.33$\times$ \\
    256K & 1073.530 & 882.520 & \cellcolor{blue!8}\textbf{1.22}$\times$ & 34.360 & 25.780 & \cellcolor{gray!15}\textbf{1.33}$\times$ & 838.80 & 237.20 &
    \cellcolor{blue!8}\textbf{3.54}$\times$ & 34.362 & 25.784 & \cellcolor{gray!15}\textbf{1.33}$\times$ \\
    \midrule
    \multicolumn{13}{c}{\textit{LLaMA-3.1-8B (sparsity=50\%)}} \\
    \midrule
    4K   & 1.382 & 0.800 & \cellcolor{blue!8}1.73$\times$ & 0.537 & 0.294 & \cellcolor{gray!15}1.83$\times$ & 22.00 & 10.60 &
    \cellcolor{blue!8}2.08$\times$ & 0.540 & 0.290 & \cellcolor{gray!15}1.83$\times$ \\
    8K   & 2.632 & 1.861 & \cellcolor{blue!8}1.41$\times$ & 1.074 & 0.562 & \cellcolor{gray!15}1.91$\times$ & 23.65 & 12.53 &
    \cellcolor{blue!8}1.89$\times$ & 1.076 & 0.563 & \cellcolor{gray!15}1.91$\times$ \\
    16K  & 6.307 & 4.698 & \cellcolor{blue!8}1.34$\times$ & 2.148 & 1.099 & \cellcolor{gray!15}1.95$\times$ & 31.55 & 16.91 &
    \cellcolor{blue!8}1.87$\times$ & 2.150 & 1.100 & \cellcolor{gray!15}1.95$\times$ \\
    32K  & 19.592 & 13.250 & \cellcolor{blue!8}1.48$\times$ & 4.295 & 2.173 & \cellcolor{gray!15}1.98$\times$ & 53.20 & 26.76 &
    \cellcolor{blue!8}1.99$\times$ & 4.298 & 2.174 & \cellcolor{gray!15}1.98$\times$ \\
    64K  & 68.447 & 43.587 & \cellcolor{blue!8}1.57$\times$ & 8.590 & 4.320 & \cellcolor{gray!15}1.99$\times$ & 99.53 & 45.21 &
    \cellcolor{blue!8}2.20$\times$ & 8.593 & 4.321 & \cellcolor{gray!15}1.99$\times$ \\
    128K & 259.748 & 154.225 & \cellcolor{blue!8}1.68$\times$ & 17.180 & 8.615 & \cellcolor{gray!15}1.99$\times$ & 199.55 & 85.49 &
    \cellcolor{blue!8}2.33$\times$ & 17.183 & 8.616 & \cellcolor{gray!15}1.99$\times$ \\
    256K & 1073.530 & 615.020 & \cellcolor{blue!8}\textbf{1.75}$\times$ & 34.360 & 17.200 & \cellcolor{gray!15}\textbf{2.00}$\times$ & 838.80 & 162.27 &
    \cellcolor{blue!8}\textbf{5.17}$\times$ & 34.362 & 17.206 & \cellcolor{gray!15}\textbf{2.00}$\times$ \\
    \midrule
    \multicolumn{13}{c}{\textit{LLaMA-3.1-8B (sparsity=75\%)}} \\
    \midrule
    4K   & 1.382 & 0.759 & \cellcolor{blue!8}1.82$\times$ & 0.537 & 0.172 & \cellcolor{gray!15}3.12$\times$ & 22.00 & 9.64 &
    \cellcolor{blue!8}2.28$\times$ & 0.540 & 0.170 & \cellcolor{gray!15}3.13$\times$ \\
    8K   & 2.632 & 1.611 & \cellcolor{blue!8}1.63$\times$ & 1.074 & 0.306 & \cellcolor{gray!15}3.51$\times$ & 23.65 & 10.65 &
    \cellcolor{blue!8}2.22$\times$ & 1.076 & 0.307 & \cellcolor{gray!15}3.51$\times$ \\
    16K  & 6.307 & 3.730 & \cellcolor{blue!8}1.69$\times$ & 2.148 & 0.575 & \cellcolor{gray!15}3.74$\times$ & 31.55 & 13.00 &
    \cellcolor{blue!8}2.43$\times$ & 2.150 & 0.575 & \cellcolor{gray!15}3.74$\times$ \\
    32K  & 19.592 & 9.427 & \cellcolor{blue!8}2.08$\times$ & 4.295 & 1.112 & \cellcolor{gray!15}3.86$\times$ & 53.20 & 18.49 &
    \cellcolor{blue!8}2.88$\times$ & 4.298 & 1.112 & \cellcolor{gray!15}3.86$\times$ \\
    64K  & 68.447 & 27.184 & \cellcolor{blue!8}2.52$\times$ & 8.590 & 2.185 & \cellcolor{gray!15}3.93$\times$ & 99.53 & 28.50 &
    \cellcolor{blue!8}3.49$\times$ & 8.593 & 2.186 & \cellcolor{gray!15}3.93$\times$ \\
    128K & 259.748 & 90.084 & \cellcolor{blue!8}2.88$\times$ & 17.180 & 4.333 & \cellcolor{gray!15}3.97$\times$ & 199.55 & 49.90 &
    \cellcolor{blue!8}4.00$\times$ & 17.183 & 4.333 & \cellcolor{gray!15}3.97$\times$ \\
    256K & 1073.530 & 331.840 & \cellcolor{blue!8}\textbf{3.24}$\times$ & 34.360 & 8.630 & \cellcolor{gray!15}\textbf{3.98}$\times$ & 838.80 & 91.73 &
  \cellcolor{blue!8}\textbf{9.14}$\times$ & 34.362 & 8.628 & \cellcolor{gray!15}\textbf{3.98}$\times$ \\
    \bottomrule
    \vspace{-1.5em}
    \end{tabular}}
    \end{table*}

\begin{table*}[t]
\centering
\small
\caption{Prefill/decode latency and KV memory of AoH on Qwen2.5-7B across sparsity levels and context lengths. Speedups and memory reduction are relative to Full Attention. AoH decode latency uses eager two-path attention with whole-step CUDA-graph capture.}
\label{tab:appendix_efficiency_qwen25}
\resizebox{\textwidth}{!}{%
\setlength{\tabcolsep}{3pt}
\begin{tabular}{c r r c r r c r r c r r c}
\toprule
\multirow{2}{*}{\textbf{Ctx Len}}
& \multicolumn{3}{c}{\textbf{Prefill Latency (s, total)}}
& \multicolumn{3}{c}{\textbf{Prefill Mem (GB)}}
& \multicolumn{3}{c}{\textbf{Decode Latency (ms/tok)}}
& \multicolumn{3}{c}{\textbf{Decode Mem (GB)}} \\
\cmidrule(lr){2-4}\cmidrule(lr){5-7}\cmidrule(lr){8-10}\cmidrule(lr){11-13}
& \textbf{Full} & \textbf{AoH} & \cellcolor{blue!8}\textbf{Spd$\uparrow$}
& \textbf{Full} & \textbf{AoH} & \cellcolor{gray!15}\textbf{Mem$\downarrow$}
& \textbf{Full} & \textbf{AoH} & \cellcolor{blue!8}\textbf{Spd$\uparrow$}
& \textbf{Full} & \textbf{AoH} & \cellcolor{gray!15}\textbf{Mem$\downarrow$} \\
\midrule
\multicolumn{13}{c}{\textit{Qwen2.5-7B (sparsity=25\%)}} \\
\midrule
4K   & 1.273 & 0.947 & \cellcolor{blue!8}\textbf{1.35}$\times$ & 0.235 & 0.182 & \cellcolor{gray!15}1.29$\times$ & 19.92 & 9.91 & \cellcolor{blue!8}2.01$\times$ & 0.236 & 0.183 & \cellcolor{gray!15}1.29$\times$ \\
8K   & 2.115 & 1.975 & \cellcolor{blue!8}1.07$\times$ & 0.470 & 0.358 & \cellcolor{gray!15}1.31$\times$ & 21.52 & 12.04 & \cellcolor{blue!8}1.79$\times$ & 0.471 & 0.359 & \cellcolor{gray!15}1.31$\times$ \\
16K  & 5.056 & 5.129 & \cellcolor{blue!8}0.99$\times$ & 0.940 & 0.710 & \cellcolor{gray!15}1.32$\times$ & 26.62 & 17.09 & \cellcolor{blue!8}1.56$\times$ & 0.941 & 0.711 & \cellcolor{gray!15}1.32$\times$ \\
32K  & 16.031 & 14.673 & \cellcolor{blue!8}1.09$\times$ & 1.879 & 1.415 & \cellcolor{gray!15}1.33$\times$ & 43.83 & 27.81 & \cellcolor{blue!8}1.58$\times$ & 1.880 & 1.416 & \cellcolor{gray!15}1.33$\times$ \\
64K  & 50.914 & 46.750 & \cellcolor{blue!8}1.09$\times$ & 3.758 & 2.824 & \cellcolor{gray!15}1.33$\times$ & 78.83 & 48.38 & \cellcolor{blue!8}1.63$\times$ & 3.759 & 2.825 & \cellcolor{gray!15}1.33$\times$ \\
128K & 191.469 & 169.305 & \cellcolor{blue!8}1.13$\times$ & 7.516 & 5.643 & \cellcolor{gray!15}1.33$\times$ & 160.44 & 94.63 & \cellcolor{blue!8}1.70$\times$ & 7.517 & 5.644 & \cellcolor{gray!15}1.33$\times$ \\
256K & 776.550 & 642.693 & \cellcolor{blue!8}1.21$\times$ & 15.032 & 11.280 & \cellcolor{gray!15}\textbf{1.33}$\times$ & 312.31 & 181.92 & \cellcolor{blue!8}\textbf{1.72}$\times$ & 15.034 & 11.281 & \cellcolor{gray!15}\textbf{1.33}$\times$ \\
\midrule
\multicolumn{13}{c}{\textit{Qwen2.5-7B (sparsity=50\%)}} \\
\midrule
4K   & 1.273 & 0.765 & \cellcolor{blue!8}1.66$\times$ & 0.235 & 0.128 & \cellcolor{gray!15}1.83$\times$ & 19.92 & 9.29 & \cellcolor{blue!8}2.14$\times$ & 0.236 & 0.129 & \cellcolor{gray!15}1.83$\times$ \\
8K   & 2.115 & 1.737 & \cellcolor{blue!8}1.22$\times$ & 0.470 & 0.246 & \cellcolor{gray!15}1.91$\times$ & 21.52 & 10.86 & \cellcolor{blue!8}1.98$\times$ & 0.471 & 0.246 & \cellcolor{gray!15}1.91$\times$ \\
16K  & 5.056 & 5.109 & \cellcolor{blue!8}0.99$\times$ & 0.940 & 0.481 & \cellcolor{gray!15}1.95$\times$ & 26.62 & 14.10 & \cellcolor{blue!8}1.89$\times$ & 0.941 & 0.481 & \cellcolor{gray!15}1.95$\times$ \\
32K  & 16.031 & 11.362 & \cellcolor{blue!8}1.41$\times$ & 1.879 & 0.951 & \cellcolor{gray!15}1.98$\times$ & 43.83 & 21.75 & \cellcolor{blue!8}2.02$\times$ & 1.880 & 0.951 & \cellcolor{gray!15}1.98$\times$ \\
64K  & 50.914 & 34.670 & \cellcolor{blue!8}1.47$\times$ & 3.758 & 1.890 & \cellcolor{gray!15}1.99$\times$ & 78.83 & 35.94 & \cellcolor{blue!8}2.19$\times$ & 3.759 & 1.891 & \cellcolor{gray!15}1.99$\times$ \\
128K & 191.469 & 134.345 & \cellcolor{blue!8}1.43$\times$ & 7.516 & 3.769 & \cellcolor{gray!15}1.99$\times$ & 160.44 & 66.98 & \cellcolor{blue!8}2.40$\times$ & 7.517 & 3.770 & \cellcolor{gray!15}1.99$\times$ \\
256K & 776.550 & 456.776 & \cellcolor{blue!8}\textbf{1.70}$\times$ & 15.032 & 7.527 & \cellcolor{gray!15}\textbf{2.00}$\times$ & 312.31 & 127.32 & \cellcolor{blue!8}\textbf{2.45}$\times$ & 15.034 & 7.528 & \cellcolor{gray!15}\textbf{2.00}$\times$ \\
\midrule
\multicolumn{13}{c}{\textit{Qwen2.5-7B (sparsity=75\%)}} \\
\midrule
4K   & 1.273 & 0.768 & \cellcolor{blue!8}1.66$\times$ & 0.235 & 0.075 & \cellcolor{gray!15}3.12$\times$ & 19.92 & 7.61 & \cellcolor{blue!8}2.62$\times$ & 0.236 & 0.076 & \cellcolor{gray!15}3.13$\times$ \\
8K   & 2.115 & 1.543 & \cellcolor{blue!8}1.37$\times$ & 0.470 & 0.134 & \cellcolor{gray!15}3.51$\times$ & 21.52 & 7.68 & \cellcolor{blue!8}2.80$\times$ & 0.471 & 0.134 & \cellcolor{gray!15}3.51$\times$ \\
16K  & 5.056 & 3.440 & \cellcolor{blue!8}1.47$\times$ & 0.940 & 0.251 & \cellcolor{gray!15}3.74$\times$ & 26.62 & 8.32 & \cellcolor{blue!8}3.20$\times$ & 0.941 & 0.252 & \cellcolor{gray!15}3.74$\times$ \\
32K  & 16.031 & 8.161 & \cellcolor{blue!8}1.96$\times$ & 1.879 & 0.486 & \cellcolor{gray!15}3.86$\times$ & 43.83 & 9.38 & \cellcolor{blue!8}4.67$\times$ & 1.880 & 0.487 & \cellcolor{gray!15}3.86$\times$ \\
64K  & 50.914 & 21.675 & \cellcolor{blue!8}2.35$\times$ & 3.758 & 0.956 & \cellcolor{gray!15}3.93$\times$ & 78.83 & 10.80 & \cellcolor{blue!8}7.30$\times$ & 3.759 & 0.956 & \cellcolor{gray!15}3.93$\times$ \\
128K & 191.469 & 66.267 & \cellcolor{blue!8}2.89$\times$ & 7.516 & 1.896 & \cellcolor{gray!15}3.96$\times$ & 160.44 & 13.63 & \cellcolor{blue!8}11.77$\times$ & 7.517 & 1.896 & \cellcolor{gray!15}3.97$\times$ \\
256K & 776.550 & 222.189 & \cellcolor{blue!8}\textbf{3.50}$\times$ & 15.032 & 3.775 & \cellcolor{gray!15}\textbf{3.98}$\times$ & 312.31 & 21.53 & \cellcolor{blue!8}\textbf{14.50}$\times$ & 15.034 & 3.775 & \cellcolor{gray!15}\textbf{3.98}$\times$ \\
\bottomrule
\end{tabular}}
\end{table*}

\begin{table*}[t]
\centering
\small
\caption{Prefill/decode latency and KV memory of AoH on Qwen3-8B across sparsity levels and context lengths. Speedups and memory reduction are relative to Full Attention.}
\label{tab:appendix_efficiency_qwen3}
\resizebox{\textwidth}{!}{%
\setlength{\tabcolsep}{3pt}
\begin{tabular}{c r r c r r c r r c r r c}
\toprule
\multirow{2}{*}{\textbf{Ctx Len}}
  & \multicolumn{3}{c}{\textbf{Prefill Latency (s, total)}}
  & \multicolumn{3}{c}{\textbf{Prefill Mem (GB)}}
  & \multicolumn{3}{c}{\textbf{Decode Latency (ms/tok)}}
  & \multicolumn{3}{c}{\textbf{Decode Mem (GB)}} \\
\cmidrule(lr){2-4}\cmidrule(lr){5-7}\cmidrule(lr){8-10}\cmidrule(lr){11-13}
  & \textbf{Full} & \textbf{AoH} & \cellcolor{blue!8}\textbf{Spd$\uparrow$}
  & \textbf{Full} & \textbf{AoH} & \cellcolor{gray!15}\textbf{Mem$\downarrow$}
  & \textbf{Full} & \textbf{AoH} & \cellcolor{blue!8}\textbf{Spd$\uparrow$}
  & \textbf{Full} & \textbf{AoH} & \cellcolor{gray!15}\textbf{Mem$\downarrow$} \\
\midrule
\multicolumn{13}{c}{\textit{Qwen3-8B (sparsity=25\%)}} \\
\midrule
4K   & 1.305 & 1.075 & \cellcolor{blue!8}\textbf{1.21}$\times$ & 0.604 & 0.467 & \cellcolor{gray!15}1.29$\times$ & 28.82 & 14.05 & \cellcolor{blue!8}2.05$\times$ & 0.607 & 0.469 & \cellcolor{gray!15}1.29$\times$ \\
8K   & 2.511 & 2.492 & \cellcolor{blue!8}1.01$\times$ & 1.208 & 0.920 & \cellcolor{gray!15}1.31$\times$ & 28.90 & 17.86 & \cellcolor{blue!8}1.62$\times$ & 1.211 & 0.922 & \cellcolor{gray!15}1.31$\times$ \\
16K  & 7.064 & 6.694 & \cellcolor{blue!8}1.06$\times$ & 2.416 & 1.826 & \cellcolor{gray!15}1.32$\times$ & 38.42 & 25.46 & \cellcolor{blue!8}1.51$\times$ & 2.419 & 1.828 & \cellcolor{gray!15}1.32$\times$ \\
32K  & 22.458 & 20.701 & \cellcolor{blue!8}1.09$\times$ & 4.832 & 3.638 & \cellcolor{gray!15}1.33$\times$ & 64.40 & 40.65 & \cellcolor{blue!8}1.58$\times$ & 4.835 & 3.640 & \cellcolor{gray!15}1.33$\times$ \\
64K  & 76.034 & 69.077 & \cellcolor{blue!8}1.10$\times$ & 9.664 & 7.262 & \cellcolor{gray!15}1.33$\times$ & 116.43 & 71.29 & \cellcolor{blue!8}1.63$\times$ & 9.667 & 7.264 & \cellcolor{gray!15}1.33$\times$ \\
128K & 292.425 & 257.593 & \cellcolor{blue!8}1.14$\times$ & 19.327 & 14.510 & \cellcolor{gray!15}1.33$\times$ & 235.80 & 144.21 & \cellcolor{blue!8}1.64$\times$ & 19.330 & 14.512 & \cellcolor{gray!15}1.33$\times$ \\
256K & 1155.898 & 1016.460 & \cellcolor{blue!8}1.14$\times$ & 38.655 & 29.005 & \cellcolor{gray!15}\textbf{1.33}$\times$ & 459.90 & 277.59 & \cellcolor{blue!8}\textbf{1.66}$\times$ & 38.658 & 29.007 & \cellcolor{gray!15}\textbf{1.33}$\times$ \\
\midrule
\multicolumn{13}{c}{\textit{Qwen3-8B (sparsity=50\%)}} \\
\midrule
4K   & 1.305 & 1.473 & \cellcolor{blue!8}0.89$\times$ & 0.604 & 0.330 & \cellcolor{gray!15}1.83$\times$ & 28.82 & 12.99 & \cellcolor{blue!8}2.22$\times$ & 0.607 & 0.332 & \cellcolor{gray!15}1.83$\times$ \\
8K   & 2.511 & 2.313 & \cellcolor{blue!8}1.09$\times$ & 1.208 & 0.632 & \cellcolor{gray!15}1.91$\times$ & 28.90 & 15.62 & \cellcolor{blue!8}1.85$\times$ & 1.211 & 0.634 & \cellcolor{gray!15}1.91$\times$ \\
16K  & 7.064 & 5.581 & \cellcolor{blue!8}1.27$\times$ & 2.416 & 1.236 & \cellcolor{gray!15}1.95$\times$ & 38.42 & 20.78 & \cellcolor{blue!8}1.85$\times$ & 2.419 & 1.238 & \cellcolor{gray!15}1.95$\times$ \\
32K  & 22.458 & 15.762 & \cellcolor{blue!8}1.42$\times$ & 4.832 & 2.444 & \cellcolor{gray!15}1.98$\times$ & 64.40 & 31.11 & \cellcolor{blue!8}2.07$\times$ & 4.835 & 2.446 & \cellcolor{gray!15}1.98$\times$ \\
64K  & 76.034 & 51.437 & \cellcolor{blue!8}1.48$\times$ & 9.664 & 4.860 & \cellcolor{gray!15}1.99$\times$ & 116.43 & 51.99 & \cellcolor{blue!8}2.24$\times$ & 9.667 & 4.862 & \cellcolor{gray!15}1.99$\times$ \\
128K & 292.425 & 181.280 & \cellcolor{blue!8}1.61$\times$ & 19.327 & 9.692 & \cellcolor{gray!15}1.99$\times$ & 235.80 & 101.45 & \cellcolor{blue!8}2.32$\times$ & 19.330 & 9.694 & \cellcolor{gray!15}1.99$\times$ \\
256K & 1155.898 & 691.531 & \cellcolor{blue!8}\textbf{1.67}$\times$ & 38.655 & 19.356 & \cellcolor{gray!15}\textbf{2.00}$\times$ & 459.90 & 192.29 & \cellcolor{blue!8}\textbf{2.39}$\times$ & 38.658 & 19.357 & \cellcolor{gray!15}\textbf{2.00}$\times$ \\
\midrule
\multicolumn{13}{c}{\textit{Qwen3-8B (sparsity=75\%)}} \\
\midrule
4K   & 1.305 & 0.873 & \cellcolor{blue!8}1.50$\times$ & 0.604 & 0.193 & \cellcolor{gray!15}3.12$\times$ & 28.82 & 11.89 & \cellcolor{blue!8}2.42$\times$ & 0.607 & 0.194 & \cellcolor{gray!15}3.13$\times$ \\
8K   & 2.511 & 1.900 & \cellcolor{blue!8}1.32$\times$ & 1.208 & 0.345 & \cellcolor{gray!15}3.51$\times$ & 28.90 & 13.34 & \cellcolor{blue!8}2.17$\times$ & 1.211 & 0.345 & \cellcolor{gray!15}3.51$\times$ \\
16K  & 7.064 & 4.408 & \cellcolor{blue!8}1.60$\times$ & 2.416 & 0.646 & \cellcolor{gray!15}3.74$\times$ & 38.42 & 16.12 & \cellcolor{blue!8}2.38$\times$ & 2.419 & 0.647 & \cellcolor{gray!15}3.74$\times$ \\
32K  & 22.458 & 11.478 & \cellcolor{blue!8}1.96$\times$ & 4.832 & 1.250 & \cellcolor{gray!15}3.86$\times$ & 64.40 & 21.82 & \cellcolor{blue!8}2.95$\times$ & 4.835 & 1.251 & \cellcolor{gray!15}3.86$\times$ \\
64K  & 76.034 & 32.813 & \cellcolor{blue!8}2.32$\times$ & 9.664 & 2.458 & \cellcolor{gray!15}3.93$\times$ & 116.43 & 33.13 & \cellcolor{blue!8}3.51$\times$ & 9.667 & 2.459 & \cellcolor{gray!15}3.93$\times$ \\
128K & 292.425 & 105.451 & \cellcolor{blue!8}2.77$\times$ & 19.327 & 4.874 & \cellcolor{gray!15}3.96$\times$ & 235.80 & 59.71 & \cellcolor{blue!8}3.95$\times$ & 19.330 & 4.875 & \cellcolor{gray!15}3.97$\times$ \\
256K & 1155.898 & 384.765 & \cellcolor{blue!8}\textbf{3.00}$\times$ & 38.655 & 9.706 & \cellcolor{gray!15}\textbf{3.98}$\times$ & 459.90 & 108.87 & \cellcolor{blue!8}\textbf{4.22}$\times$ & 38.658 & 9.707 & \cellcolor{gray!15}\textbf{3.98}$\times$ \\
\bottomrule
\end{tabular}}
\end{table*}

\section{AoH-Guided Sparse Attention: H2Share}
\label{sec:H2Share}

\begin{figure*}[t]
    \centering
    \makebox[\textwidth][c]{\includegraphics[width=1.10\textwidth]{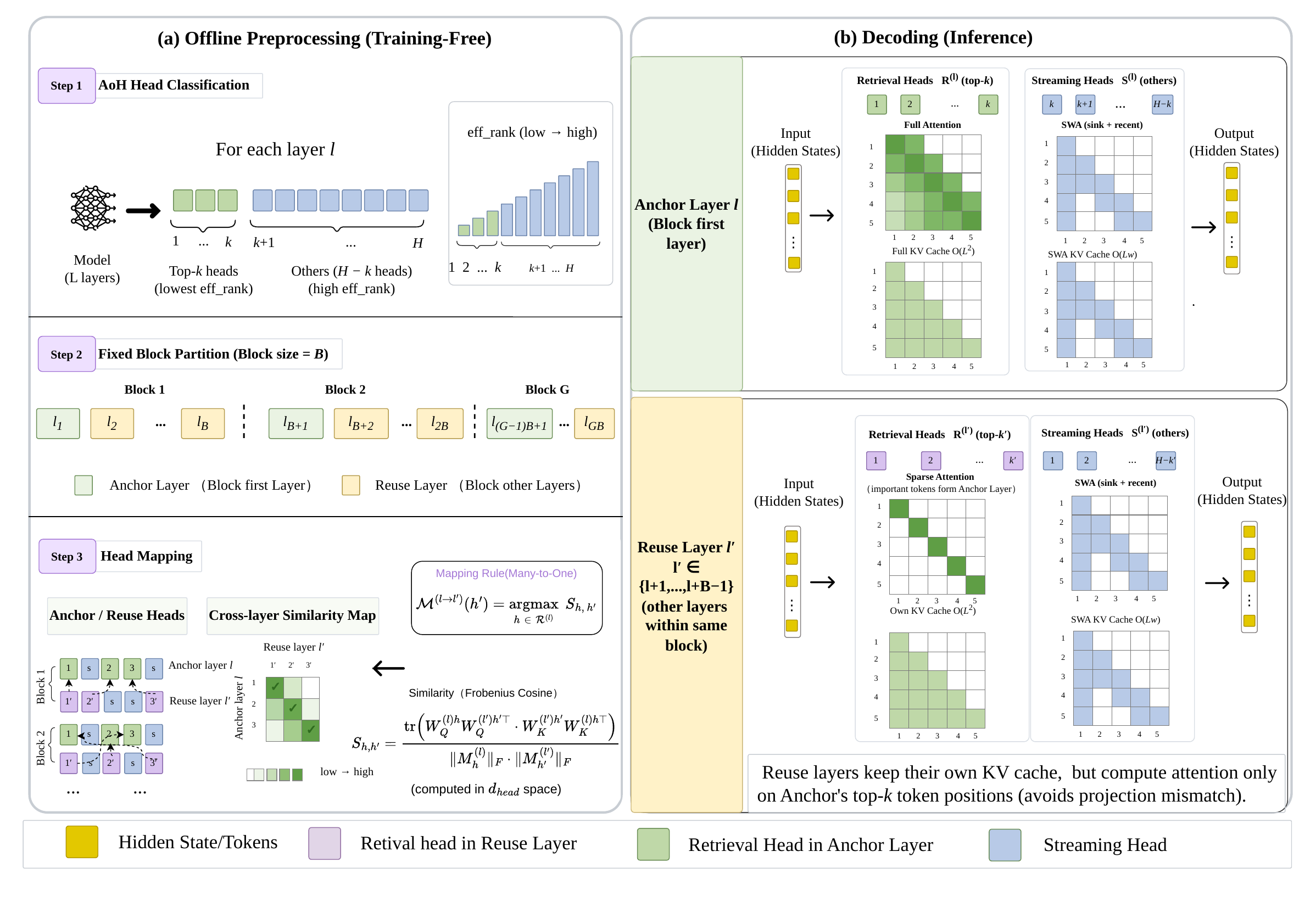}}
    \caption{Overview of H2Share. \textbf{Left}: Offline preprocessing classifies heads with AoH, partitions layers into anchor--reuse blocks, and builds cross-layer head mappings. \textbf{Right}: During decoding, anchor retrieval heads compute full attention and select important-token indices; reuse retrieval heads borrow only these indices and attend to their own layer-specific keys and values at the mapped positions; streaming heads use sliding-window attention.}
    \label{fig:architectural}
\end{figure*}

AoH has the strongest scalability and configurability; therefore, in this section, we introduce an AoH-guided cross-layer shared sparse Attention structure, H2Share.
Building on AoH, H2Share exploits both head heterogeneity and adjacent-layer stability through three offline steps: head or KV-group classification, anchor--reuse block assignment, and cross-layer head/group mapping. The online policy is then determined by the layer role and head type. All offline steps use frozen weights only; no training or calibration prompts are required. Figure~\ref{fig:architectural} summarizes the architecture. Importantly, H2Share is an \emph{index-only} sharing method: the only object transferred across layers is a set of integer token positions, never key/value vectors.

\subsection{Head Classification}
Algorithm~\ref{alg:aoh} classifies each head as retrieval or streaming using the effective-rank of its kernel attention matrix. For head $h$ in layer $l$, we compute the $d_\text{head}$-dimensional proxy $C_h^{(l)} = (W_Q^{(l)h} W_Q^{(l)h\top})(W_K^{(l)h} W_K^{(l)h\top})$, whose eigenvalues equal the squared nonzero singular values of $M_h^{(l)}$, and evaluate
\begin{equation}
\begin{split}
\operatorname{eff\_rank}^{(l)}(h) &= \exp\!\left(-\sum_{k=1}^{r_h} \hat{\sigma}_k\log \hat{\sigma}_k\right), \\
\text{where } \hat{\sigma}_k &= \frac{\sqrt{\lambda_k(C_h^{(l)})}}{\sum_{j=1}^{r_h}\sqrt{\lambda_j(C_h^{(l)})}},\quad k=1,\ldots,r_h.
\end{split}
\end{equation}
For each layer, heads are sorted by effective-rank in ascending order and the lowest-$k$ units are designated retrieval heads. The budget $k$ controls the retrieval/streaming ratio. For GQA models, the same decision is lifted to KV-group granularity to preserve grouped-cache efficiency; details are in Section~\ref{sec:GQA}.

\subsection{Anchor--Reuse Blocks and Head-Mapped Index Reuse}
\textbf{Anchor--reuse blocks.} We partition the $L$ transformer layers into consecutive, non-overlapping blocks of size $B$. The first layer of each block is the \textbf{Anchor Layer}; the remaining $B{-}1$ layers are \textbf{Reuse Layers}. In an Anchor Layer, AoH-identified retrieval heads compute global attention and select top-$k$ important-token indices according to their attention scores. These indices are made available to Reuse Layers in the same block. Streaming heads do not participate in cross-layer reuse and are served by sink tokens plus a recent-window cache.

\textbf{Index-only reuse.} Unlike HySparse~\cite{hysparse}, which shares both token selection and KV cache from a preceding full-attention layer, H2Share transfers only selected token indices. Each Reuse Layer computes its own keys and values, then its retrieval heads attend sparsely to the layer-specific K/V states at the borrowed positions. This design preserves the pretrained layer-specific projection geometry while still reducing attention computation. The shared-KV variant is used only as a negative ablation in Figure~\ref {fig:ablation_kv}, where it collapses due to cross-layer projection-space mismatch.

\begin{figure*}[t]
    \centering
    \includegraphics[width=0.98\textwidth]{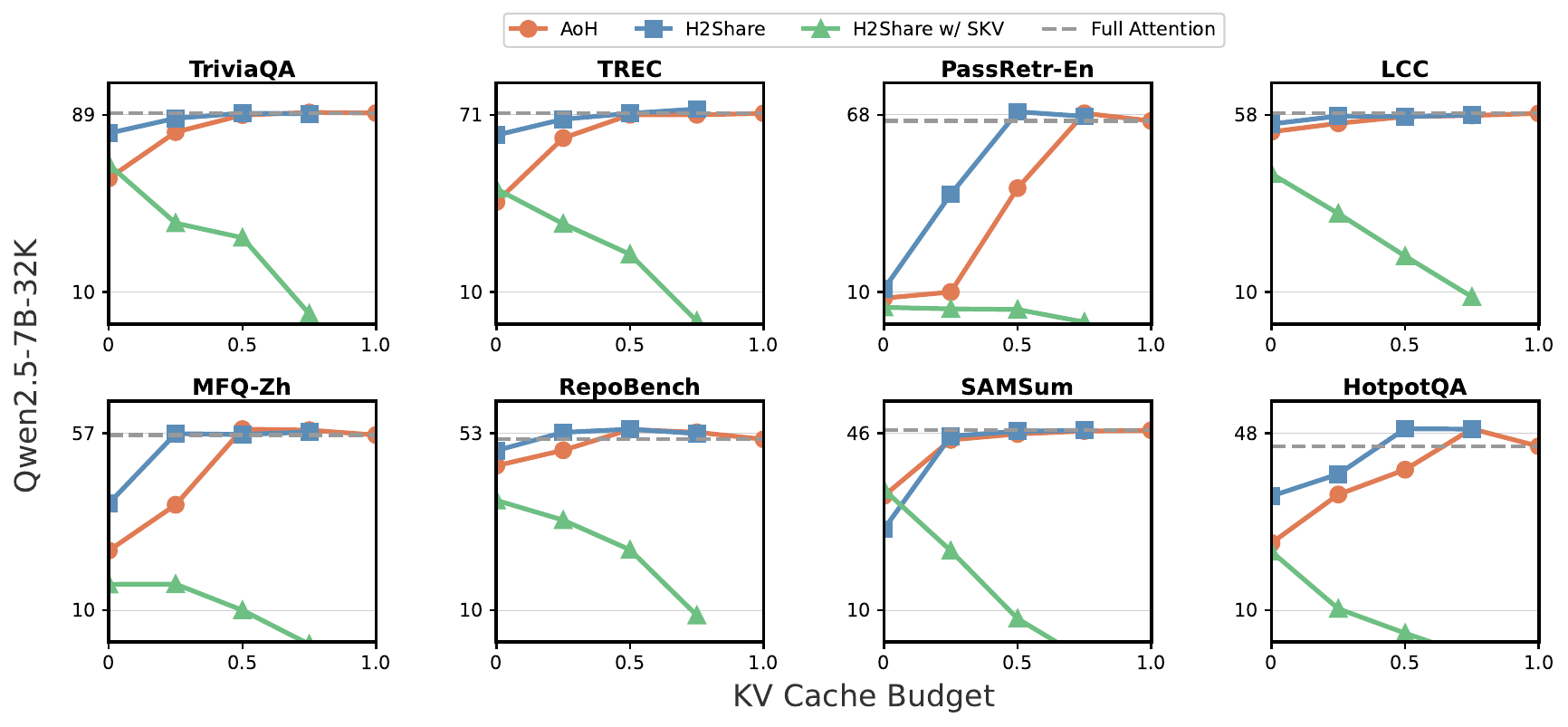}
    \vspace{-1em}
    \caption{\label{fig:ablation_kv}Ablation Experiments on LongBench. H2Share offers a better KV budget-accuracy trade-off.}
    \vspace{-0.8em}
\end{figure*}

\textbf{Head mapping.} Naively giving every Reuse Retrieval Head the indices selected by the same-indexed Anchor Head assumes that head indices have identical functions across adjacent layers. This assumption is unreliable. We instead map each Reuse Retrieval Head $h' \in \mathcal{R}^{(l')}$ to the functionally closest Retrieval Head $h \in \mathcal{R}^{(l)}$ in the Anchor Layer using the normalized Frobenius inner product between their kernel attention matrices:
\begin{figure*}[t]
\begin{equation}
\begin{split}
    \mathcal{M}^{(l \to l')}_{h,h'} &= \frac{\left\langle M_h^{(l)},\, M_{h'}^{(l')} \right\rangle_F}{\left\|M_h^{(l)}\right\|_F \cdot \left\|M_{h'}^{(l')}\right\|_F}, \quad h \in \{1,\ldots,H\}^{(l)},\; h' \in \{1,\ldots,H\}^{(l')},\quad \mathcal{M}^{(l \to l')} \in \mathbb{R}^{H\times H}\\[8pt]
    \mathcal{M}^{(l \to l')}_{h,h'} &= \frac{
        \operatorname{tr}\!\left(W_Q^{(l)h}\, W_Q^{(l')h'\top} \cdot W_K^{(l')h'}\, W_K^{(l)h\top}\right)
    }{
        \sqrt{\operatorname{tr}\!\left(W_Q^{(l)h}\, W_Q^{(l)h\top} \cdot W_K^{(l)h}\, W_K^{(l)h\top}\right)}
        \cdot
        \sqrt{\operatorname{tr}\!\left(W_Q^{(l')h'}\, W_Q^{(l')h'\top} \cdot W_K^{(l')h'}\, W_K^{(l')h'\top}\right)}
    }.
\end{split}
\label{eq:sim}
\end{equation}
\end{figure*}
By the cyclic invariance of the trace, all quantities in Eq.~\eqref{eq:sim} are computed efficiently in $d_\text{head}$ space without ever forming the $d_\text{model}{\times}d_\text{model}$ matrix $M_h$; the detailed proof is in Appendix~\ref{ap:traceProf}. Each Reuse Retrieval Head uses the important-token indices selected by its best-matched Anchor Retrieval Head.

\textbf{Inference.} Let $\mathcal{R}^{(l)}$ and $\mathcal{S}^{(l)}$ denote the retrieval and streaming heads in layer $l$. An Anchor Layer computes global attention for retrieval heads and sliding-window attention for streaming heads:
\begin{equation}
\begin{split}
    o^{(l)} = \Big[ &\operatorname{Attn}\!\left(q_{\mathcal{R}}^{(l)}, K_{\mathcal{R}}^{(l)}, V_{\mathcal{R}}^{(l)}\right) \\ 
    &\Big|\; \operatorname{SWA}\!\left(q_{\mathcal{S}}^{(l)}, K_{\mathcal{S}}^{(l)}, V_{\mathcal{S}}^{(l)}\right)\Big] W_O^{(l)}.
\end{split}
\end{equation}
It records the selected indices $\mathcal{I}_{h}^{(l)}$ for each Anchor Retrieval Head. For a Reuse Layer $l'$, each Retrieval Head $h'$ obtains indices from its mapped Anchor Head $m(h')$ and attends only to its own layer-specific K/V states at those positions:
\begin{equation}
\begin{split}
    o^{(l')} = \Big[ &\operatorname{Attn}\!\left(q_{\mathcal{R}}^{(l')}, K_{\mathcal{R}}^{(l')}[\mathcal{I}_{m(\cdot)}^{(l)}], V_{\mathcal{R}}^{(l')}[\mathcal{I}_{m(\cdot)}^{(l)}]\right) \\
    &\Big|\; \operatorname{SWA}\!\left(q_{\mathcal{S}}^{(l')}, K_{\mathcal{S}}^{(l')}, 
    V_{\mathcal{S}}^{(l')}\right)\Big] W_O^{(l')}.
\end{split}
\end{equation}
Thus, H2Share reuses cross-layer information only at the level of discrete important-token positions; all key/value vectors remain layer-specific.

\subsection{Inference Phase}
\label{ab:Inference Phase}
\underline{\textbf{Prefill Stage}}: $W_Q^{(l)}, W_K^{(l)}, W_V^{(l)}$ are reordered offline along the head dimension so that Retrieval and Streaming Heads form contiguous slices, making all head-type splits simple tensor indexing with no gather overhead. Retrieval Heads process the full prompt via standard FlashAttention-2~\cite{dao2023flashattention}, maintaining layer-specific $O(T)$ KV states. Streaming Heads adopt chunked prefilling: the prompt is divided into fixed-size chunks, and after each chunk, the KV cache is immediately trimmed to retain only sink tokens and the most recent window, bounding memory to $O(s_\text{sink} + s_\text{recent})$ regardless of text length.

\underline{\textbf{Decode Stage}}: Each layer applies its assigned strategy based on its role (Anchor or Reuse) and head type (Retrieval or Streaming). An Anchor Layer $l$ uses layer-specific full K/V states for Retrieval Heads and a fixed-size window cache for Streaming Heads. Each head type is computed independently; their outputs are concatenated along the head dimension and projected through a shared output matrix:
\begin{equation}
\begin{split}
    o^{(l)} = \left[\! \vphantom{\operatorname{Attn}\!\left(q_\mathcal{R}^{(l)},\;\mathrm{KV}_\mathcal{R}^{(l)}\right)} \right. 
    &\underbrace{\operatorname{Attn}\!\left(q_\mathcal{R}^{(l)},\;\mathrm{KV}_\mathcal{R}^{(l)}\right)}_{\text{Full Attention}}, \\
    &\left. \underbrace{\operatorname{Attn}\!\left(q_\mathcal{S}^{(l)},\;\mathrm{KV}_\mathcal{S}^{(l)}\right)}_{\text{SlidingWindowAttention}} \!\right] W_O^{(l)}
\end{split}
\end{equation}
where $\operatorname{Concat}(\cdot)$ denotes concatenation along the head axis, followed by reshaping to $d_\text{model}$. 

After processing the full context, the Anchor Layer records the top-$k$ token indices $\mathcal{I}_h^{(l)}$ per Retrieval Head $h$ for reuse within the block. A Reuse Layer $l'$ borrows only these integer indices via the head mapping $\mathcal{M}^{(l \to l')}$ and computes its own K/V states independently: each Retrieval Head computes sparse attention over the mapped $k$ positions, while Streaming Heads compute sliding-window attention independently.
\begin{equation}
\begin{split}
    o^{(l')} = \left[\! \vphantom{\left[\mathcal{I}_{\mathcal{M}^{(l \to l')}(\cdot)}^{(l)}\right]} \right. &\underbrace{\operatorname{Attn}\!\left(q_\mathcal{R}^{(l')},\; \mathrm{KV}_\mathcal{R}^{(l')}\!\left[\mathcal{I}_{\mathcal{M}^{(l \to l')}(\cdot)}^{(l)}\right]\right)}_{\text{Sparse Attention (borrowed indices, own KV)}}, \\
    &\left. \underbrace{\operatorname{Attn}\!\left(q_\mathcal{S}^{(l')},\; \mathrm{KV}_\mathcal{S}^{(l')}\right)}_{\text{SWA}} \!\right] W_O^{(l')}
\end{split}
\end{equation}
where $\mathrm{KV}_\mathcal{R}^{(l')}\!\left[\mathcal{I}_{\mathcal{M}^{(l \to l')}(\cdot)}^{(l)}\right]$ denotes the Reuse Layer's own K/V states indexed by the top-$k$ positions borrowed from the mapped Anchor Head. The borrowed object is only the discrete index set; no Anchor-Layer K/V vector is transferred. For Retrieval Heads, the attention scan is restricted to $O(k)$ positions; for Streaming Heads, a fixed-size sliding-window KV cache is dynamically maintained.

\section{GQA Extension}
\label{sec:GQA}
Our proposed solutions are all based on Multi-head Attention (MHA)~\cite{cordonnier2020multi}, but modern LLMs~\cite{mimov25,grattafiori2024llama,qwen2.5,yang2025qwen3} increasingly adopt Grouped Query Attention (GQA)~\cite{ainslie2023gqa}, where $H$ Query Heads are grouped into $G$ KV Groups, with $\frac{H}{G}$ Query Heads sharing a single KV Head within each group. Therefore, the retention decision must also be made at the KV-Group level. If a classification strategy targeting individual query heads were adopted, partitioning the KV cache within a group would lose the storage efficiency of GQA.

Group-level Classification: 
For each KV Group $g$, every constituent Query Head $h$ independently evaluates its effective-rank from its per-head Kernel Attention Matrix $M_h$, computed using the shared group key projection $W_K^g$ and its own query projection $W_Q^h$. 
We aggregate these per-head ranks into a single group-level score by taking the \emph{mean} across all member heads in the group. Within each layer, KV groups are then ranked by this score, and the $k = \lceil (1-s),G \rceil$ groups with the lowest effective-rank are labelled Retrieval Groups (where $s$ is the target sparsity and $G$ the number of KV groups), while the remainder are Streaming Groups. All Query Heads within a group inherit the same label, so the classification maps cleanly back to the Q-Head level without ambiguity.

Group-level Head Mapping: The cross-layer semantic similarity matrix reduced from $H \times H$ (MHA) to $G \times G$ (GQA), with one entry per KV Group pair.
For each group pair, the pairwise Q-Head similarities are aggregated (mean) to produce a single group-to-group score.
The mapping then assigns each Reuse Retrieval Group to its most similar Anchor Retrieval Group, and all Query Heads within that group borrow the same set of top-$k$ token indices.

Weight Reordering:  Offline permutation operates at two granularities simultaneously: $W_K^{(l)}$ and $W_V^{(l)}$ are reordered along the KV-Group axis according to the group classification permutation, while $W_Q^{(l)}$ is reordered along the Q-Head axis by expanding each group index into its $\frac{H}{G}$ constituent Q-Head indices. After reordering, Retrieval and Streaming Groups remain contiguous in memory, preserving full compatibility with FlashAttention 's~\cite {dao2023flashattention} native GQA kernel support with no additional gather overhead.



\onecolumn

\raggedbottom
\setlength{\abovedisplayskip}{6pt plus 2pt minus 2pt}
\setlength{\belowdisplayskip}{6pt plus 2pt minus 2pt}
\setlength{\abovedisplayshortskip}{3pt plus 1pt minus 1pt}
\setlength{\belowdisplayshortskip}{6pt plus 2pt minus 2pt}

\section{Proof: Trace Reduction to \texorpdfstring{$d_{\text{head}}$}{d\_head} Space}
\label{ap:traceProf}
We prove that the Frobenius inner product $\langle M_h^{(l)},\, M_{h'}^{(l')} \rangle_F$ and the Frobenius norms $\|M_h^{(l)}\|_F$ in Eq.~\eqref{eq:sim} can be computed entirely in $d_{\text{head}}$ space, without ever constructing the $d_{\text{model}} \times d_{\text{model}}$ kernel attention matrix $M_h$.

\subsection*{Setup}

Recall the kernel attention matrix for head $h$ at layer $l$:
\begin{equation}
    M_h^{(l)} = W_K^{(l)h\top} W_Q^{(l)h} \in \mathbb{R}^{d_{\text{model}} \times d_{\text{model}}},
    \qquad W_Q^{(l)h},\, W_K^{(l)h} \in \mathbb{R}^{d_{\text{head}} \times d_{\text{model}}}.
\end{equation}
Since $\operatorname{rank}(M_h^{(l)}) \leq d_{\text{head}} \ll d_{\text{model}}$, direct construction of $M_h^{(l)}$ requires $O(d_{\text{model}}^2 \cdot d_{\text{head}})$ operations. We show that both quantities reduce to traces of $d_{\text{head}} \times d_{\text{head}}$ matrices.

\subsection*{Frobenius Inner Product}

\begin{proposition}
\label{prop:inner}
$\displaystyle\langle M_h^{(l)},\, M_{h'}^{(l')} \rangle_F
= \operatorname{tr}\!\left(W_Q^{(l)h}\, W_Q^{(l')h'\top}
  \cdot W_K^{(l')h'}\, W_K^{(l)h\top}\right).$
\end{proposition}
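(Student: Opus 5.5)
The plan is to unfold the Frobenius inner product as a trace and then use cyclic invariance of the trace to move the $d_{\text{model}}$-dimensional contraction to the inside. Abbreviate $Q=W_Q^{(l)h}$, $K=W_K^{(l)h}$, $Q'=W_Q^{(l')h'}$, $K'=W_K^{(l')h'}$, all in $\mathbb{R}^{d_{\text{head}}\times d_{\text{model}}}$. Then $M_h^{(l)}=K^\top Q$ and $M_{h'}^{(l')}=K'^\top Q'$.

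First, I would use the definition $\langle A,B\rangle_F=\operatorname{tr}(A^\top B)$ for real matrices of equal shape. This gives
\[
\langle M_h^{(l)},M_{h'}^{(l')}\rangle_F=\operatorname{tr}\!\big((K^\top Q)^\top K'^\top Q'\big)=\operatorname{tr}\!\big(Q^\top K K'^\top Q'\big),
\]
which is a trace of a $d_{\text{model}}\times d_{\text{model}}$ product.

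Second, I would apply cyclic invariance, $\operatorname{tr}(ABCD)=\operatorname{tr}(DABC)$, which holds whenever the products are conformable. Moving $Q'$ to the front gives
\[
\operatorname{tr}\!\big(Q'Q^\top K K'^\top\big),
\]
where $Q'Q^\top$ and $KK'^\top$ are both $d_{\text{head}}\times d_{\text{head}}$.

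The stated form is $\operatorname{tr}(QQ'^\top K'K^\top)$. It is the trace of the transpose of $Q'Q^\top KK'^\top$, because $(Q'Q^\top KK'^\top)^\top = K'K^\top QQ'^\top$. One more cyclic shift then gives $\operatorname{tr}(QQ'^\top K'K^\top)$. Since $\operatorname{tr}(A)=\operatorname{tr}(A^\top)$, the two expressions agree.

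The only real obstacle is bookkeeping: checking that each intermediate product has compatible dimensions, and that the placement of transposes matches the proposition exactly rather than a permuted variant. I would verify each step by writing out the shapes. As a sanity check, setting $h'=h$ and $l'=l$ gives $\|M_h^{(l)}\|_F^2=\operatorname{tr}(QQ^\top KK^\top)=\operatorname{tr}(C_h)$, which is consistent with the normalization in Eq.~\eqref{eq:sim} and with $\sum_k\sigma_k^2=\sum_k\lambda_k(C_h)$. This reduction also shows the cost is $O(d_{\text{head}}^2 d_{\text{model}})$.
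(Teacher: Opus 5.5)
Your proposal is correct and follows the paper's proof step for step: write $\langle M_h^{(l)}, M_{h'}^{(l')}\rangle_F = \operatorname{tr}({M_h^{(l)}}^\top M_{h'}^{(l')})$, cycle $W_Q^{(l')h'}$ to the front to get a product of two $d_{\text{head}}\times d_{\text{head}}$ factors, and finish with $\operatorname{tr}(A)=\operatorname{tr}(A^\top)$. You also state explicitly the extra cyclic shift after transposing, which the paper uses without comment, so your bookkeeping is slightly more complete than the paper's.
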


\begin{proof}
By definition of the Frobenius inner product and the trace identity $\langle A, B \rangle_F = \operatorname{tr}(A^\top B)$:
\begin{align}
\langle M_h^{(l)},\, M_{h'}^{(l')} \rangle_F
&= \operatorname{tr}\!\left({M_h^{(l)}}^\top M_{h'}^{(l')}\right) \notag \\
&= \operatorname{tr}\!\left(
    \underbrace{W_Q^{(l)h\top}}_{d_{\text{model}}\times d_{\text{head}}}
    \underbrace{W_K^{(l)h}}_{d_{\text{head}}\times d_{\text{model}}}
    \underbrace{W_K^{(l')h'\top}}_{d_{\text{model}}\times d_{\text{head}}}
    \underbrace{W_Q^{(l')h'}}_{d_{\text{head}}\times d_{\text{model}}}
   \right). \label{eq:tr4}
\end{align}
The trace in~\eqref{eq:tr4} operates on a $d_{\text{model}}\times d_{\text{model}}$ matrix, which is expensive. We apply the \emph{cyclic invariance} of the trace, $\operatorname{tr}(ABCD) = \operatorname{tr}(DABC)$, with
\[
    A = W_Q^{(l)h\top},\quad
    B = W_K^{(l)h},\quad
    C = W_K^{(l')h'\top},\quad
    D = W_Q^{(l')h'}.
\]
Cycling $D$ to the front:
\begin{align}
\operatorname{tr}(ABCD)
= \operatorname{tr}(DABC)
&= \operatorname{tr}\!\left(
    \underbrace{W_Q^{(l')h'} W_Q^{(l)h\top}}_{d_{\text{head}}\times d_{\text{head}}}
    \cdot
    \underbrace{W_K^{(l)h} W_K^{(l')h'\top}}_{d_{\text{head}}\times d_{\text{head}}}
   \right). \label{eq:dhead1}
\end{align}
Finally, using $\operatorname{tr}(A) = \operatorname{tr}(A^\top)$ on~\eqref{eq:dhead1}:
\[
    = \operatorname{tr}\!\left(
        W_Q^{(l)h} W_Q^{(l')h'\top}
        \cdot
        W_K^{(l')h'} W_K^{(l)h\top}
      \right),
\]
which is the trace of a $d_{\text{head}}\times d_{\text{head}}$ matrix product. This completes the proof.
\end{proof}

\subsection*{Frobenius Norm}

\begin{corollary}
\label{cor:norm}
$\displaystyle\|M_h^{(l)}\|_F^2
= \operatorname{tr}\!\left(W_Q^{(l)h}\, W_Q^{(l)h\top}
  \cdot W_K^{(l)h}\, W_K^{(l)h\top}\right).$
\end{corollary}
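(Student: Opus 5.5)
The corollary is the diagonal case of Proposition~\ref{prop:inner}, so I will specialize rather than re-derive. Setting $l'=l$ and $h'=h$ in Proposition~\ref{prop:inner} gives $\|M_h^{(l)}\|_F^2 = \langle M_h^{(l)}, M_h^{(l)}\rangle_F$. This follows because the Frobenius norm is induced by the Frobenius inner product.

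Substituting into the right-hand side of Proposition~\ref{prop:inner} yields
\[
\operatorname{tr}\!\left(W_Q^{(l)h} W_Q^{(l)h\top}\cdot W_K^{(l)h} W_K^{(l)h\top}\right).
\]
This is exactly the claimed expression, a trace of a product of two $d_\text{head}\times d_\text{head}$ Gram matrices. No further work is needed beyond checking that the index substitution is legitimate. The proposition holds for arbitrary heads and layers, including coincident ones, so it is.

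As a sanity check, I would also verify the identity directly:
\[
\|M_h\|_F^2=\operatorname{tr}\bigl(W_Q^{h\top}W_K^hW_K^{h\top}W_Q^h\bigr).
\]
Cycling the last factor $W_Q^h$ to the front gives $\operatorname{tr}\bigl((W_Q^hW_Q^{h\top})(W_K^hW_K^{h\top})\bigr)$.

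I would remark that this trace equals $\operatorname{tr}(C_h)=\sum_k\lambda_k(C_h)=\sum_k\sigma_k(M_h)^2$. This is consistent with the proxy $C_h$ used in the Optimized Implementation.

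There is no real obstacle. The only point needing care is the order of the Gram factors under the transpose step. Since $\operatorname{tr}(XY)=\operatorname{tr}(YX)$ for these square $d_\text{head}\times d_\text{head}$ matrices, either ordering is correct.
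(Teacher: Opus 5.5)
Your proposal is correct and matches the paper's proof: specialize Proposition~\ref{prop:inner} to $l'=l$, $h'=h$ and use $\|M_h^{(l)}\|_F^2=\langle M_h^{(l)},M_h^{(l)}\rangle_F$. Your direct cyclic-trace check and your observation that the right-hand side equals $\operatorname{tr}(C_h)=\sum_k\sigma_k(M_h)^2$ are also correct, though the paper does not spell them out.
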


\begin{proof}
Setting $l' = l$ and $h' = h$ in Proposition~\ref{prop:inner} gives
\[
    \|M_h^{(l)}\|_F^2
    = \langle M_h^{(l)},\, M_h^{(l)} \rangle_F
    = \operatorname{tr}\!\left(W_Q^{(l)h} W_Q^{(l)h\top}
        \cdot W_K^{(l)h} W_K^{(l)h\top}\right). \qedhere
\]
\end{proof}

\subsection*{Complexity}

Both quantities now require computing two $d_{\text{head}}\times d_{\text{head}}$ matrices and taking their trace:
\begin{itemize}
    \item $W_Q^{(l)h} W_Q^{(l')h'\top} \in \mathbb{R}^{d_{\text{head}}\times d_{\text{head}}}$: cost $O(d_{\text{head}}^2 \cdot d_{\text{model}})$.
    \item $W_K^{(l')h'} W_K^{(l)h\top} \in \mathbb{R}^{d_{\text{head}}\times d_{\text{head}}}$: cost $O(d_{\text{head}}^2 \cdot d_{\text{model}})$.
    \item Trace of $d_{\text{head}}\times d_{\text{head}}$ product: cost $O(d_{\text{head}}^3)$.
\end{itemize}
This is a reduction from $O(d_{\text{model}}^3)$ (direct construction + inner product of $M_h$) to $O(d_{\text{head}}^2 \cdot d_{\text{model}})$. For Qwen3-8B ($d_{\text{head}}=128$, $d_{\text{model}}=4096$), this yields a factor of $\approx\!d_{\text{model}}/d_{\text{head}} = 32\times$ speedup.

\end{document}